\newtheorem{theorem}{Theorem}
\newtheorem{lemma}{Lemma}
\newtheorem{corollary}{Corollary}
\newtheorem{remark}{Remark}
\title{Improved Kernel Alignment Regret Bound for Online Kernel Learning}
\author{
    %Authors
    % All authors must be in the same font size and format.
    Junfan Li, Shizhong Liao\thanks{Corresponding author.}\\
}
\title{My Publication Title --- Single Author}
\author {
    Author Name
}
\title{My Publication Title --- Multiple Authors}
\author {
    % Authors
    First Author Name,\textsuperscript{\rm 1,\rm 2}
    Second Author Name, \textsuperscript{\rm 2}
    Third Author Name \textsuperscript{\rm 1}
}
\begin{document}

\maketitle

\begin{abstract}
  In this paper,
  we improve the kernel alignment regret bound for online kernel learning
  in the regime of the Hinge loss function.
  Previous algorithm achieves
  a regret of $O((\mathcal{A}_TT\ln{T})^{\frac{1}{4}})$
  at a computational complexity (space and per-round time)
  of $O(\sqrt{\mathcal{A}_TT\ln{T}})$,
  where $\mathcal{A}_T$ is called \textit{kernel alignment}.
  We propose an algorithm
  whose regret bound and computational complexity are better than previous results.
  Our results depend on the decay rate of eigenvalues of the kernel matrix.
  If the eigenvalues of the kernel matrix decay exponentially,
  then our algorithm enjoys a regret of $O(\sqrt{\mathcal{A}_T})$
  at a computational complexity of $O(\ln^2{T})$.
  Otherwise,
  our algorithm enjoys a regret of $O((\mathcal{A}_TT)^{\frac{1}{4}})$
  at a computational complexity of $O(\sqrt{\mathcal{A}_TT})$.
  We extend our algorithm to batch learning
  and obtain a $O(\frac{1}{T}\sqrt{\mathbb{E}[\mathcal{A}_T]})$ excess risk bound
  which improves the previous $O(1/\sqrt{T})$ bound.
\end{abstract}

\section{Introduction}

    Online kernel learning is a popular non-parametric method
    for solving large-scale batch learning and online learning problems.
    Online kernel learning algorithms only pass the data once
    and thus are computationally efficient.
    Specifically,
    $\forall t=1,\ldots,T$,
    an online learning algorithm receives an instance $\mathbf{x}_t\in\mathbb{R}^d$.
    Then it selects a hypothesis $f_t\in \mathbb{H}$ and makes prediction $f_t(\mathbf{x}_t)$.
    $\mathbb{H}=\{f\in\mathcal{H}\vert \Vert f\Vert_{\mathcal{H}}\leq U\}$,
    where $\mathcal{H}$ is a reproducing kernel Hilbert space (RKHS).
    After that, the algorithm observes $y_t$
    and suffers a loss $\ell(f_t(\mathbf{x}_t),y_t)$.
    In an online learning setting,
    $\{(\mathbf{x}_t,y_t)^T_{t=1}\}$ may not be i.i.d.,
    and can even be adversarial.
    We focus on the online prediction performance,
    and aim to minimize the cumulative losses $\sum^T_{t=1}\ell(f_t(\mathbf{x}_t),y_t)$.
    We usually use the \textit{regret} to measure the performance,
    which is defined as follows,
    \begin{equation}
    \label{eq:AAAI23:definition_regret}
        \forall f\in\mathbb{H},~\mathrm{Reg}(f)
        =\sum^T_{t=1}[\ell(f_t({\bf x}_t),y_t)-\ell(f({\bf x}_t),y_t)].
    \end{equation}
    An effective algorithm must ensure $\mathrm{Reg}(f)=o(T)$,
    which implies the average loss of the algorithm
    converges to that of the optimal hypothesis.
    In this paper,
    $\ell(\cdot,\cdot)$ is the Hinge loss function,
    and $y_t\in\{-1,1\}$.

    The minimax lower bound on the regret is $\Omega(\sqrt{T})$
    for the Hinge loss function \cite{Abernethy2008Optimal}.
    The online gradient descent (OGD) algorithm \cite{Zinkevich2003Online}
    enjoys a $O(\sqrt{T})$ upper bound which is optimal w.r.t. $T$.
    OGD still suffers two weaknesses.
    (i) It nearly stores all of the observed examples,
    and thus suffers a $O(dt)$ computational complexity
    (i.e., space and per-round time complexity),
    which is prohibitive for large-scale datasets.
    (ii)
    The $O(\sqrt{T})$ bound is too pessimistic for certain benign environments.
    In practice,
    the data might be learnt easily or have some intrinsic structures
    that can be used to circumvent the $\Omega(\sqrt{T})$ barrier.
    Most of the previous algorithms only address the first weakness.
    Such algorithms store limited examples or construct explicit feature mapping.
    For instance,
    the BOGD and FOGD algorithm \cite{Zhao2012Fast,Lu2016Large}
    suffer a $O(dB)$ computational complexity
    and achieve a $O(\sqrt{T}+\frac{T}{\sqrt{B}})$
    \footnote{
%    $\mathbb{H}$ can be a large enough hypothesis space,
    The regret bound of BOGD is obtained from Eq.(14) in \cite{Zhao2012Fast}.}
    regret bound,
    where $B$ is the size of budget or the number of random features.
    The NOGD algorithm \cite{Lu2016Large}
    which uses Nystr\"{o}m approach to construct explicit feature mapping,
    enjoys a regret of $O(\sqrt{T}+\frac{T}{B})$
    at a computational complexity of $O(dB+B^2)$ .
    The above algorithms reduce the computational complexity at the expense of regret bound.
    The SkeGD algorithm \cite{Zhang2019Incremental}
    which uses randomized sketching to construct explicit feature mapping,
    enjoys a regret of $O(\sqrt{TB})$
    at a computational complexity of $O(dB\mathrm{poly}(\ln{T}))$.
    Although the results are better for a constant $B$,
    they become worse in the case of $B=\Theta(T^\mu)$, $0<\mu < 1$.

    The only algorithm that addresses both weaknesses simultaneously
    is $\mathrm{B}\mathrm{(AO)}_2\mathrm{KS}$ \cite{Liao2021High}.
    It achieves a regret of $O((\mathcal{A}_TT\ln{T})^{\frac{1}{4}})$
    at a computational complexity of $O(d\sqrt{\mathcal{A}_TT\ln{T}})$.
    The kernel alignment, $\mathcal{A}_T$,
    measures how well the kernel function matches with the data.
    If we choose a good kernel function,
    then $\mathcal{A}_T=o(T)$ is possible.
    In this case,
    $\mathrm{B}\mathrm{(AO)}_2\mathrm{KS}$ circumvents the $\Omega(\sqrt{T})$ barrier
    and only suffers a $o(T)$ computational complexity.
    If we choose a bad kernel function,
    $\mathrm{B}\mathrm{(AO)}_2\mathrm{KS}$ still nearly matches the results of OGD.
    A natural question arises:
    \textit{Is it possible to achieve a regret of $O(\sqrt{\mathcal{A}_T})$
    at a computational complexity of $O(\mathrm{poly}(\ln{T}))$?}
    An algorithm with such characteristics
    would constitute a significant improvement on the existing algorithms, including
    OGD, SkeGD and $\mathrm{B}\mathrm{(AO)}_2\mathrm{KS}$.

    In this paper,
    we propose an algorithm, named POMDR,
    and affirmatively answer the question under some mild assumption.
    Our results depend on how fast the eigenvalues of the kernel matrix decay.
    If the eigenvalues decay exponentially,
    then POMDR enjoys a regret of $O(\sqrt{\mathcal{A}_T})$
    at a computational complexity of $O(d\ln{T}+\ln^2{T})$.
    Otherwise, POMDR enjoys a regret of $O(\sqrt{\mathcal{A}_T}+\frac{\sqrt{T\mathcal{A}_T}}{\sqrt{B}})$
    at a computational complexity of $O(dB)$.
    $B$ is a tunable parameter.
    If $B=\sqrt{\mathcal{A}_TT}$,
    then POMDR still improves the results of $\mathrm{B}\mathrm{(AO)}_2\mathrm{KS}$.
    Table \ref{tab:AAAI2023:comparison_results} summarizes the related results.

    \begin{table}[!t]
      \centering
      \begin{tabular}{l|l|r}
      \toprule
        \multirow{2}{*}{Algorithm}       & \multirow{2}{*}{Regret bound}  & Computational    \\
                                         &                                & complexity    \\
      \toprule
        OGD             & $O(\sqrt{T})$ & $O(dT)$\\
        BOGD            & $O(\sqrt{T}+\frac{T}{\sqrt{B}})$ & $O(dB)$\\
        NOGD            & $O(\sqrt{T}+\frac{T}{B})$ & $O(dB+B^2)$\\
        SkeGD           & $O(\sqrt{TB})$ & $O(dB\mathrm{poly}(\ln{T}))$\\
        $\mathrm{B}\mathrm{(AO)}_2\mathrm{KS}$
        & $O((\mathcal{A}_TT\ln{T})^{\frac{1}{4}})$
                        & $O(d\sqrt{\mathcal{A}_TT\ln{T}})$\\
      \hline
        \multirow{2}{*}{POMDR}& ${\color{blue}O(\sqrt{\mathcal{A}_T})}$
                        & ${\color{blue}O((d\wedge\ln{T})\ln{T})}$\\
                        & ${\color{blue}O(\sqrt{\mathcal{A}_T}+\frac{\sqrt{T\mathcal{A}_T}}{\sqrt{B}})}$
                        & ${\color{blue}O(dB)}$\\
      \bottomrule
      \end{tabular}
      \caption{Regret bound and computational complexity
%      (space complexity and per-round time complexity)
      of online kernel learning algorithms in the regime of the Hinge loss.
      $B\leq T$ is the size of budget.
      $\max\{a,b\}=a\wedge b$.}
      \label{tab:AAAI2023:comparison_results}
    \end{table}

    Our algorithm combines two techniques, namely
    optimistic mirror descent (OMD) \cite{Chiang2012Online,Rakhlin2013Online}
    and the \textit{approximate linear dependence (ALD) condition} \cite{Engel2004The},
    and gives a new budget maintaining approach.
    OMD achieves the $O(\sqrt{\mathcal{A}_T})$ regret bound.
    The ALD condition ensures the $O(d\ln{T}+\ln^2{T})$ computational complexity.
    The main challenge is to analyze the size of the budget maintained by the ALD condition.
    Previous analysis assumed that examples are i.i.d,
    which is commonly violated in an online learning setting.
    We give a new and cleaner analysis for the size of budget.
    To be specific,
    if the eigenvalues of the kernel matrix decay exponentially,
    then the size is $O(\ln{\frac{T}{\alpha}})$,
    where $\alpha> 0$ is a threshold parameter.
    If the eigenvalues decay polynomially with degree $p\geq 1$,
    then the size is $O((T/\alpha)^{\frac{1}{p}})$.

    We extend our algorithm to batch learning,
    and give a $O(\frac{1}{T}\sqrt{\mathbb{E}[\mathcal{A}_T]})$ excess risk bound in expectation.
    Such a result may also circumvent the $\Omega(\frac{1}{\sqrt{T}})$ barrier \cite{Srebro2010Smoothness}.

\section{Related Work}

    \citet{Engel2004The} first
    used the ALD condition to control the budget of the kernel recursive least-squares algorithm,
    and proved the budget is bounded.
    The Projectron algorithm \cite{Orabona2008The} also used the ALD condition to maintain the budget
    and proved a same result.
    An significant improvement on the size of the budget was given by \citet{Sun2012On}.
    They proved similar results with our results.
    However,
    their results are not suitable for online learning,
    since they assume that the examples are i.i.d.
    Our results do not require such an assumption and are more general.
%    Besides,
%    their results only hold in a high-probability,
%    while our results are deterministic.

    Our results reveal new trade-offs between regret bound and computational costs
    for online kernel learning in the regime of the Hinge loss function.
    Previous work only focus on loss functions with strong curvature properties,
    such as smoothness and exp-concave.
    For exp-concave loss functions,
    the PROS-N-KONS algorithm \cite{Calandriello2017Efficient}
    which combines the online Newton step algorithm \cite{Hazan2007Logarithmic}
    and KORS \cite{Calandriello2017Second},
    achieves a regret of $O(\mathrm{d}_{\mathrm{eff}}(\gamma)\ln^2{T})$
    at a computational complexity of $O(\mathrm{d}^2_{\mathrm{eff}}(\gamma)\ln^4{T})$.
    $\mathrm{d}_{\mathrm{eff}}(\gamma)$ is the effective dimension of kernel matrix.
    If the eigenvalues of kernel matrix decay exponentially,
    then $\mathrm{d}_{\mathrm{eff}}(\gamma)=O(\ln{T})$.
    For smooth loss functions,
    the OSKL algorithm \cite{Zhang2013Online}
    achieves a regret of $O(\sqrt{L^\ast_T})$ at a computational complexity of $O(dL^\ast_T)$.
    $L^\ast_T$ is the cumulative losses of optimal hypothesis.
    The above two types of regret bound can also circumvent the $\Omega(\sqrt{T})$ barrier,
    but are not suitable for the Hinge loss function
    that does not enjoy strong curvature properties.

    In batch learning setting,
    the examples are i.i.d. sampled from a fixed distribution.
    In this paper,
    our problem setting captures the classical support vector machines (SVM).
    Our algorithm solves the SVM in an online approach
    and outputs an approximate solution $\bar{f}$.
    We prove that $\bar{f}$ achieves
    a $O(\frac{1}{T}\sqrt{\mathbb{E}[\mathcal{A}_T]})$ excess risk bound.
    The Pegasos algorithm \cite{Shalev-Shwartz2007Pegasos}
    can only achieves a $O(\frac{1}{\sqrt{T}})$ excess risk bound.
    If the eigenvalues of the kernel matrix decay exponentially,
    then the time complexity of our algorithm is $O(T\cdot(d\ln{T}+\ln^2{T}))$,
    while the time complexity of Pegasos is $O(dT^2)$.

\section{Problem Setting}

    Let $\mathcal{X}=\{\mathbf{x}\in\mathbb{R}^d\vert\Vert\mathbf{x}\Vert_2<\infty\}$
    and $\mathcal{I}_T=\{(\mathbf{x}_t,y_t)_{t\in[T]}\}$ be a sequence of examples,
    where $[T] = \{1,\ldots,T\}$,
    $\mathbf{x}_t\in\mathcal{X}, y_t\in \{-1,1\}$.
    Let $\kappa(\cdot,\cdot):\mathbb{R}^d \times \mathbb{R}^d \rightarrow [A,D]$
    be a positive semidefinite kernel function,
    where $A>0$.
    Denote by $\mathcal{H}$ the RKHS associated with $\kappa$,
    such that
    (i) $\langle f,\kappa({\bf x},\cdot)\rangle_{\mathcal{H}}=f({\bf x})$;
    (ii) $\mathcal{H}=\overline{\mathrm{span}(\kappa({\bf x}_t,\cdot): t\in[T])}$.
    We define $\langle\cdot,\cdot\rangle_{\mathcal{H}}$ as the inner product in $\mathcal{H}$,
    which induces the norm $\Vert f\Vert_{\mathcal{H}}=\sqrt{\langle f,f\rangle_{\mathcal{H}}}$.
    Denote by $\mathbb{H}=\{f\in\mathcal{H}\vert \Vert f\Vert_{\mathcal{H}}\leq U\}$.
    The hinge loss function is $\ell(f(\mathbf{x}),y)=\max\{0,1-yf(\mathbf{x})\}$.

    Let $\psi(f)$ be a strongly convex regularizer defined on $f\in\mathcal{H}$.
    Denote by $\mathcal{D}_{\psi}(f,g)$ the Bregman divergence,
    $$
        \mathcal{D}_{\psi}(f,g)
        =\psi(f)-\psi(g)-\langle\nabla\psi(g),f-g\rangle.
    $$
    The protocol of online learning in $\mathbb{H}$ is as follows:
    at any round $t$,
    an adversary first sends an instance ${\bf x}_t\in\mathcal{X}$.
    An learner uses an algorithm to choose a hypothesis $f_t\in\mathbb{H}$,
    and makes the prediction $\mathrm{sign}(f_t({\bf x}_t))$.
    Then the adversary reveals the label $y_t$.
    We aim to minimize the regret w.r.t. any $f\in\mathbb{H}$,
    denoted by $\mathrm{Reg}(f)$ which is defined in \eqref{eq:AAAI23:definition_regret}.

    OGD achieves $\mathrm{Reg}(f)=O(U\sqrt{T})$ which is optimal in the worst-case.
    We will prove a data-dependent regret bound,
    that is, $\mathrm{Reg}(f)=O(U\sqrt{A_T})$,
    where $\mathcal{A}_T$ is called \textit{kernel alignment} \cite{Liao2021High} defined as follows
    $$
        \mathcal{A}_T=\sum^T_{t=1}\kappa(\mathbf{x}_t,\mathbf{x}_t)
        -\frac{1}{T}{\bf Y}^\top_T{\bf K}_T{\bf Y}_T.
    $$
    ${\bf K}_T$ is the kernel matrix on $\mathcal{I}_T$
    and ${\bf Y}_T=(y_1,\ldots,y_T)^\top$.
    ${\bf Y}^\top_T{\bf K}_T{\bf Y}_T$ is
    called \textit{kernel polarization} \cite{Baram2005Learning},
    a classical kernel selection criterion.
    If $\mathbf{K}_{T}$ is the ideal kernel matrix $\mathbf{Y}_T\mathbf{Y}^\top_T$,
    then $\mathcal{A}_T=\Theta(1)$.
    More generally,
    if $\kappa$ matches well with the data,
    then we expect that $\mathcal{A}_T\ll T$.
    In this case,
    the $O(U\sqrt{A_T})$ bound circumvents the $\Omega(U\sqrt{T})$ barrier.
    In the worse case, i.e.,
    $\mathcal{A}_T\approx T$,
    we still have $O(U\sqrt{A_T})=O(U\sqrt{T})$.

\section{Algorithm}

    Our algorithm consists of two phases.
    The first one is named Projected Optimistic Mirror Descent (POMD).
    The second one is Optimistic Mirror Descent with Removing (OMDR).

\subsection{POMD}

    POMD is based on the optimistic mirror descent framework
    (OMD) \cite{Chiang2012Online,Rakhlin2013Online}.
    For all $t \geq 1$,
    OMD maintains $f'_{t-1},f_{t}\in\mathcal{H}$.
    Let $S_t$ be a budget storing a subset of $\{(\mathbf{x}_{\tau},y_{\tau})^{t-1}_{\tau=1}\}$
    and $\nabla_t=\nabla\ell(f_t(\mathbf{x}_t),y_t)$.
    OMD is defined as follows,
    \begin{align}
        f_{t}=&\mathop{\arg\min}_{f\in\mathcal{H}}\left\{
        \langle f,\bar{\nabla}_{t}\rangle
        +\mathcal{D}_{\psi_{t}}(f,f'_{t-1})\right\},
        \label{eq:COLT2021:exact_AOMD:first_OMD_hypothesis_updating}\\
        f'_{t}=&\mathop{\arg\min}_{f\in\mathcal{H}}\left\{
        \langle f,\nabla_{t}\rangle+\mathcal{D}_{\psi_{t}}(f,f'_{t-1})\right\},
        \label{eq:COLT2021:exact_AOMD:second_OMD_hypothesis_updating}
%        \hat{f}'_t =&\mathcal{P}_{\mathcal{H}_{S_t}}(\tilde{f}'_{t}),\\
%        f'_t =&\mathop{\arg\min}_{f\in\mathbb{H}_{S_t}}\left\Vert f-\hat{f}'_t\right\Vert^2_{\mathcal{H}},
    \end{align}
    where $\bar{\nabla}_t$ is an optimistic estimator of $\nabla_t$.
    When receiving $\mathbf{x}_t$,
    we first compute $f_{t}$
    and then predict $\hat{y}_t=\mathrm{sign}(f_{t}(\mathbf{x}_t))$.
    After observing $y_t$,
    we compute the gradient $\nabla_t$
    and execute the update
    \eqref{eq:COLT2021:exact_AOMD:second_OMD_hypothesis_updating}.
    It is obvious that, if $\nabla_t=0$,
    then $f'_t=f'_{t-1}$.
    In this case,
    we do not add $(\mathbf{x}_t,y_t)$ into $S_t$.
    If $\nabla_t=-y_t\kappa(\mathbf{x}_t,\cdot)$,
    then $f'_t\neq f'_{t-1}$
    and we must add $(\mathbf{x}_t,y_t)$ into $S_t$
    which increases the memory cost.
    To address this issue,
    we use the ALD condition \cite{Engel2004The}
    to maintain the budget $S_t$.

    We consider any round $t$
    at which $\nabla_t=-y_t\kappa(\mathbf{x}_t,\cdot)$.
    The ALD condition measures
    whether $\kappa(\mathbf{x}_t,\cdot)$ is approximate linear dependence with the instances in $S_t$.
    Let ${\bm \Phi}_{S_t}=(\kappa(\mathbf{x},\cdot)_{\mathbf{x}\in S_t})$.
    We first compute the projection error
    \begin{equation}
    \label{eq:AAAI23:projection_ALD}
        \left(\min_{{\bm \beta}\in\mathbb{R}^{\vert S_t\vert}}
        \left\Vert {\bm \Phi}_{S_t}{\bm \beta}-\kappa(\mathbf{x}_t,\cdot)\right\Vert^2_{\mathcal{H}}\right)
        =:\alpha_t.
    \end{equation}
    The solution
    \footnote{If $S_t=\emptyset$, then we set ${\bm \beta}^\ast_t=0$ and $\alpha_t=D$.}
    is
    $$
        {\bm \beta}^\ast_t = {\bf K}^{-1}_{S_t}{\bm \Phi}^\top_{S_t}\kappa(\mathbf{x}_t,\cdot),
    $$
    where ${\bf K}_{S_t}$ is the kernel matrix defined on $S_t$.
    If $\alpha_t=0$,
    then $\kappa(\mathbf{x}_t,\cdot)$ is linear dependence with the instances in $S_t$.
    Thus we do not add $(\mathbf{x}_t,y_t)$ into $S_t$.
    Linear dependence is a strong condition.
    We introduce a threshold for $\alpha_t$
    and define the ALD condition as follows
    \begin{equation}
    \label{eq:AAAI23:POMD:second_updating:ALD}
        \mathrm{ALD}_t:\sqrt{\alpha_t} \leq \sqrt{D}\cdot T^{-\zeta},~\zeta\in(0,1],
    \end{equation}
    where $\sup_{{\bf x}\in\mathcal{X}}\kappa(\mathbf{x},\mathbf{x})\leq D$.
    If $\mathrm{ALD}_t$ holds,
    then $\kappa(\mathbf{x}_t,\cdot)$ is approximate linear dependence with the instances in $S_t$.
    We can safely replace $\kappa(\mathbf{x}_t,\cdot)$
    with ${\bm \Phi}_{S_t}{\bm \beta}^\ast_t$,
    and do not add $(\mathbf{x}_t,y_t)$ into $S_t$.
    We replace \eqref{eq:COLT2021:exact_AOMD:second_OMD_hypothesis_updating}
    with \eqref{eq:AAAI23:POMD:second_updating:a},
    \begin{equation}
    \label{eq:AAAI23:POMD:second_updating:a}
        \left\{
        \begin{array}{l}
            f'_{t}=\mathop{\arg\min}_{f\in\mathbb{H}}\left\{
            \langle f,\hat{\nabla}_{t}\rangle+\mathcal{D}_{\psi_{t}}(f,f'_{t-1})\right\},\\
            \hat{\nabla}_{t}=-y_t{\bm \Phi}_{S_t}{\bm \beta}^\ast_t.
        \end{array}
        \right.
    \end{equation}
    If $\mathrm{ALD}_t$ does not hold,
    that is, $\kappa(\mathbf{x}_t,\cdot)$
    can not be approximated by ${\bm \Phi}_{S_t}{\bm \beta}^\ast_t$,
    then we add $(\mathbf{x}_t,y_t)$ into $S_t$
    and replace \eqref{eq:COLT2021:exact_AOMD:second_OMD_hypothesis_updating}
    with \eqref{eq:AAAI23:POMD:second_updating:b},
    \begin{equation}
    \label{eq:AAAI23:POMD:second_updating:b}
        f'_{t}=\mathop{\arg\min}_{f\in\mathbb{H}}\left\{
        \langle f,\nabla_{t}\rangle+\mathcal{D}_{\psi_{t}}(f,f'_{t-1})\right\}.
    \end{equation}
    We set
    $\psi_t(f)=\frac{1}{2\lambda_t}\Vert f\Vert^2_{\mathcal{H}}$.
    Then \eqref{eq:COLT2021:exact_AOMD:first_OMD_hypothesis_updating},
    \eqref{eq:AAAI23:POMD:second_updating:a}
    and \eqref{eq:AAAI23:POMD:second_updating:b} become gradient descent.
    The learning rate $\lambda_t$ is defined as follows
    \begin{align*}
        \lambda_t=&\frac{U}{\sqrt{3+\sum^{t-1}_{\tau=1}\max\{\Vert\tilde{\nabla}_{\tau}-\bar{\nabla}_{\tau}\Vert^2_{\mathcal{H}}
        -\Vert\bar{\nabla}_{\tau}\Vert^2_{\mathcal{H}},0\}}},\\
        \tilde{\nabla}_{\tau}=&\hat{\nabla}_{\tau},~\mathrm{if}~\mathrm{ALD}_\tau~\mathrm{holds},\\
        \tilde{\nabla}_{\tau}=&\nabla_{\tau},~\mathrm{otherwise}.
    \end{align*}

    Note that POMD allows $f_t\in\mathcal{H}$ and $f'_t\in \mathbb{H}$,
    while OMD requires both $f_t$ and $f'_t$ belong to $\mathcal{H}$ (or $\mathbb{H}$).
    The slight modification gives improved regret bound.
    More precisely,
    there will be a negative term, $-\Vert\bar{\nabla}_t\Vert^2_{2}$,
    in the regret bound.
    We use projection to ensure $f'_t\in \mathbb{H}$,
    and thus name this procedure POMD (Projected OMD).

\subsection{OMDR}

    At any round $t$,
    we can compute ${\bf K}^{-1}_{S_t}$ incrementally.
    Thus the space and per-round time complexity of POMD are $O(d\vert S_t\vert+\vert S_t\vert^2)$.
%    Note that computing ${\bm \beta}^\ast_t$ involves the inverse matrix ${\bf K}^{-1}_{S_t}$
%    which can be computed incrementally at a time complexity of $O(d\vert S_t\vert+\vert S_t\vert^2)$.
    We will prove that $\vert S_T\vert$ depends on
    how fast the eigenvalues of the kernel matrix decay.
    If the eigenvalues decay exponentially,
    then $\vert S_T\vert=O(\ln{T})$.
    More rigorous results are given in Theorem \ref{thm:AAAI2023:size_budget}.
    Otherwise,
    $\vert S_T\vert$ may be large
    and POMD suffers high computational costs.
    To address this issue,
    we set a threshold on $\vert S_t\vert$.

    We first execute POMD.
    Let $B_0=\Theta(\ln{T})$ and
    $\bar{t}=\min_{t\in [T]}\{t:\vert S_t\vert=B_0\}$.
    If $\bar{t}<T$,
    then the eigenvalues of the kernel matrix may not decay exponentially.
    In this case,
    we stop executing POMD for $t\geq \bar{t}+1$.
    To be specific,
    we always execute the following two steps,
    \begin{align}
        f_{t}=&\mathop{\arg\min}_{f\in\mathcal{H}}\left\{
        \langle f,\bar{\nabla}_{t}\rangle
        +\mathcal{D}_{\psi_{t}}(f,f'_{t-1})\right\},\nonumber\\
        f'_{t}=&\mathop{\arg\min}_{f\in\mathbb{H}}\left\{
        \langle f,\nabla_{t}\rangle+\mathcal{D}_{\psi_{t}}(f,f'_{t-1})\right\}
        \label{eq:AAAI23:POMD:second_updating:c}.
    \end{align}
    If $\nabla_t\neq 0$,
    we add $(\mathbf{x}_t,y_t)$ into $S_t$.
    Let $B>B_0$ be another threshold.
    If $\vert S_t\vert =B$,
    then we remove ${B}/{2}$ examples from $S_t$.
    Denote by $\{t_i\in[\bar{t}+1,T]\}^N_{i=1}$ the time instances
    at which $\vert S_{t_i}\vert=B$.
    Let $S_{t_i}=\{(\mathbf{x}_{\tau_j},y_{\tau_j})^B_{j=1}\}$, where $\tau_j<t_i$.
    We will delete $\{(\mathbf{x}_{\tau_j},y_{\tau_j})^B_{j=B/2+1}\}$ from $S_{t_i}$.
    According to the representer theorem,
    we can rewrite
    $f'_{t_i}=\sum^{B}_{j=1}a_{\tau_j}\kappa(\mathbf{x}_{\tau_j},\cdot)$.
    For each $j\geq {B}/{2}+1,\ldots,B$,
    we define $\mathbf{x}_{\tau_{\underline{j}}}$ as follows
    $$
        \mathbf{x}_{\tau_{\underline{j}}}
        =\mathop{\arg\max}_{i=1,\ldots,B/2}\kappa(\mathbf{x}_{\tau_i},\mathbf{x}_{\tau_{j}}).
    $$
    To keep more information as possible,
    we construct $\underline{f}'_{t_i}$ by
    \begin{equation}
    \label{eq:AAAI2023:OMDR:projection}
        \underline{f}'_{t_i}=\sum^{B/2}_{j=1}a_{\tau_j}\kappa(\mathbf{x}_{\tau_j},\cdot)
        +\sum^{B}_{j=B/2+1}a_{\tau_j}\kappa(\mathbf{x}_{\tau_{\underline{j}}},\cdot).
    \end{equation}
    Now we redefine $f'_{t_i}=\underline{f}'_{t_i}\cdot \frac{U}{\Vert \underline{f}'_{t_i}\Vert_{\mathcal{H}}}$.
    Next we reset the learning rate after the removing operation
    \begin{align}
    \label{eq:AAAI2023:learning_rate}
    \begin{split}
        \lambda_t=&\frac{U}{\sqrt{\max_{t\in[t_j+1,t_{j+1}]}\delta_{t}+\sum^{t-1}_{\tau=t_j+1}\delta_{\tau}}},\\
        \delta_{\tau}=&\max\{\Vert\nabla_{\tau}-\bar{\nabla}_{\tau}\Vert^2_{\mathcal{H}}
        -\Vert\bar{\nabla}_{\tau}\Vert^2_{\mathcal{H}},0\}.
    \end{split}
    \end{align}
    We name this procedure OMDR.
    The space and average per-round time complexity of OMDR is $O(dB)$.

\subsection{Optimistic Estimator $\bar{\nabla}_t$}

    The key of POMD and OMDR is $\bar{\nabla}_{t}$
    which depends on the desired regret bound.
    We will prove that the regret bound depends on
    the following term,
    \begin{align*}
        &\sqrt{\sum^T_{t=1}\delta_t}=\sqrt{\sum^T_{t=1}
            ([\Vert\nabla_t-\bar{\nabla}_t\Vert^2_{\mathcal{H}}-\Vert\bar{\nabla}_t\Vert^2_{\mathcal{H}}]
            \wedge0)}.
    \end{align*}
    To achieve the kernel alignment regret bound,
    the optimal value of $\bar{\nabla}_t$ is
    $\sum^{t-1}_{\tau=1}\frac{-y_\tau}{t-1}\kappa(\mathbf{x}_\tau,\cdot)$.
    However, such a value needs to store the past $t-1$ examples.
    To avoid this issue,
    we define $\bar{\nabla}_t=-\sum^{M_t}_{r=1}\frac{y_{t-r}}{M_t}\kappa(\mathbf{x}_{t-r},\cdot)$
    and $\bar{\nabla}_1=0$.
    Let $M\geq 1$ be a small constant.
    Then we define $M_t=t-1$ for $2\leq t\leq M$
    and $M_t=M$ for $t>M$.
    Our $\bar{\nabla}_t$ only needs to store the past $M$ examples,
    and does not increases the computational costs.
    We name the algorithm POMDR and give the pseudo-code in
    Algorithm \ref{alg:AAAI2023:POMDR}.

    \begin{algorithm}[!t]
        \caption{POMDR}
        \footnotesize
        %\LinesNumbered
%        \SetAlgoVlined
        \label{alg:AAAI2023:POMDR}
        \begin{algorithmic}[1]
        \Require {$U$, $\zeta$, $B$, $B_0$, $M$.}
        \Ensure  {$f'_{0} = \bar{\nabla}_1=0$, $\bar{t}=+\infty$, $S_1=\emptyset$.}
        \For{$t=1,\ldots,T$}
            \State Receive $\mathbf{x}_t$
            \State Compute $\lambda_t$
            \State Compute $f_t$ following \eqref{eq:COLT2021:exact_AOMD:first_OMD_hypothesis_updating}
            \State Make prediction $\hat{y}_t=\mathrm{sign}(f_t({\bf x}_t))$
            \State Receive $y_t$ and compute loss $\ell(f_t({\bf x}_t),y_t)$
            \If{$\ell(f_t({\bf x}_t),y_t)>0$}
                \If{$\bar{t}\geq T$}
                \State Compute ${\bm \beta}^\ast_t = {\bf K}^{-1}_{S_t}{\bm \Phi}^\top_{S_t}\kappa(\mathbf{x}_t,\cdot)$
                \State Compute $\alpha_t$ following \eqref{eq:AAAI23:projection_ALD}
                \If{$\mathrm{ALD}_t$ holds}
                    \State Compute $\hat{\nabla}_t$
                    \State Update $f'_t$ following \eqref{eq:AAAI23:POMD:second_updating:a}
                \Else
                    \State Update $f'_t$ following \eqref{eq:AAAI23:POMD:second_updating:b}
                    \State $S_{t+1}=S_t\cup\{(\mathbf{x}_t,y_t)\}$
                    \State \textbf{if} $\vert S_{t+1}\vert==B_0$,
                            \textbf{then} $\bar{t}=t+1$
                \EndIf
                \Else
                    \State Update $f'_t$ following \eqref{eq:AAAI23:POMD:second_updating:c}
                    \State $S_{t+1}=S_t\cup\{(\mathbf{x}_t,y_t)\}$
                    \State \textbf{if} $\vert S_{t+1}\vert==B$,
                            \textbf{then} compute $\underline{f}'_{t}$ (see \eqref{eq:AAAI2023:OMDR:projection})
                            and $f'_{t}$
                \EndIf
            \EndIf
        \EndFor
    \end{algorithmic}
    \end{algorithm}

\section{Main Results}

    In this section,
    we give the size of the budget generated by the ALD condition
    and the kernel alignment regret bound.

\subsection{The size of Budget}

    \begin{theorem}
    \label{thm:AAAI2023:size_budget}
        Let $S_1=\emptyset$
        and $\mathrm{ALD}_t$ be defined in \eqref{eq:AAAI23:POMD:second_updating:ALD}
        where $\alpha=D\cdot T^{-2\zeta}$ for a certain $\zeta\in(0,1]$.
        For all $t\leq T-1$,
        if $\mathrm{ALD}_t$ does not hold,
        then $S_{t+1}=S_t\cup\{(\mathbf{x}_t,y_t)\}$.
        Otherwise, $S_{t+1}=S_t$.
        Let $\{\lambda_i\}^T_{i=1}$ be the eigenvalues of ${\bf K}_T$ sorted in decreasing order.
        If $\{\lambda_i\}^T_{i=1}$ decay exponentially,
        that is, there is a constant $R_0>0$ and $0<r<1$
        such that $\lambda_i\leq R_0r^{i}$,
        then $\vert S_T\vert\leq 2\frac{\ln{(\frac{C_1R_0}{\alpha})}}{\ln{r^{-1}}}$.
        If $\{\lambda_i\}^T_{i=1}$ decay polynomially,
        that is, there is a constant $R_0>0$ and $p\geq 1$,
        such that $\lambda_i\leq R_0i^{-p}$,
        then $\vert S_T\vert\leq \mathrm{e}\cdot(\frac{C_2R_0}{\alpha})^{\frac{1}{p}}$.
        In both cases,
        $C_1$ and $C_2$ are constants,
        and $R_0=\Theta(T)$.
    \end{theorem}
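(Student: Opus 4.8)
The plan is to bound $|S_T|$ by a single determinantal inequality, squeezing $\det(\mathbf{K}_{S_T})$ from below using that every retained example was \emph{far} (in the RKHS) from the span of the current budget, and from above using the eigenvalue decay of $\mathbf{K}_T$ together with Cauchy interlacing. Crucially this never looks at the integral operator, so no i.i.d.\ assumption is needed.

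\textbf{Step 1: a lower bound on $\det(\mathbf{K}_{S_T})$.} I would enumerate the retained examples as $\mathbf{x}_{i_1},\dots,\mathbf{x}_{i_B}$, $B=|S_T|$, in the order they entered $S_T$, and let $v_k=\kappa(\mathbf{x}_{i_k},\cdot)$. Since an example is added only when $\mathrm{ALD}_{i_k}$ fails and, at that moment, $S_{i_k}=\{(\mathbf{x}_{i_1},y_{i_1}),\dots,(\mathbf{x}_{i_{k-1}},y_{i_{k-1}})\}$, the definitions \eqref{eq:AAAI23:projection_ALD} and \eqref{eq:AAAI23:POMD:second_updating:ALD} give that the squared distance $d_k^2$ from $v_k$ to $\mathrm{span}(v_1,\dots,v_{k-1})$ equals $\alpha_{i_k}>D\,T^{-2\zeta}=\alpha$ (with the $k=1$ case $d_1^2=\kappa(\mathbf{x}_{i_1},\mathbf{x}_{i_1})\geq A$, or $D\geq\alpha$ under the empty-budget convention). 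Working inside the finite-dimensional span of $v_1,\dots,v_B$, the Gram--Schmidt / base-times-height identity for volumes gives $\det(\mathbf{K}_{S_T})=\prod_{k=1}^{B}d_k^2>\alpha^{B}$.

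\textbf{Step 2: an upper bound via interlacing.} Because $S_T\subseteq\mathcal{I}_T$, the matrix $\mathbf{K}_{S_T}$ is a $B\times B$ principal submatrix of $\mathbf{K}_T$, so Cauchy interlacing yields $\lambda_i(\mathbf{K}_{S_T})\leq\lambda_i$ for $i=1,\dots,B$ and hence $\det(\mathbf{K}_{S_T})=\prod_{i=1}^{B}\lambda_i(\mathbf{K}_{S_T})\leq\prod_{i=1}^{B}\lambda_i$. Combining with Step 1 gives the master inequality $\alpha^{B}<\prod_{i=1}^{B}\lambda_i$. In the exponential case $\lambda_i\leq R_0 r^{i}$ this reads $\alpha^{B}<R_0^{B}r^{B(B+1)/2}$; taking $B$-th roots and logarithms and using $\ln r<0$ gives $B+1<2\ln(R_0/\alpha)/\ln r^{-1}$, i.e.\ $|S_T|\leq 2\ln(C_1R_0/\alpha)/\ln r^{-1}$. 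In the polynomial case $\lambda_i\leq R_0 i^{-p}$ it reads $\alpha^{B}<R_0^{B}(B!)^{-p}$; taking $B$-th roots and invoking Stirling's lower bound $B!\geq(B/\mathrm{e})^{B}$ gives $\alpha<R_0(\mathrm{e}/B)^{p}$, hence $|S_T|=B\leq\mathrm{e}\,(C_2R_0/\alpha)^{1/p}$. The claim $R_0=\Theta(T)$ then follows from $\mathrm{tr}(\mathbf{K}_T)=\sum_i\lambda_i\in[AT,DT]$: under a summable (dimension-free) decay rate one has $\sum_i\lambda_i=\Theta(R_0)$, forcing $R_0$ to scale linearly in $T$.

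\textbf{Main obstacle.} The delicate point is Step 2: bridging the purely data-dependent quantity $\det(\mathbf{K}_{S_T})$ produced by the ALD analysis and the \emph{global} spectral-decay hypothesis on $\mathbf{K}_T$. Earlier analyses bridge this by passing to the kernel integral operator and invoking concentration, which is exactly what pins them to i.i.d.\ data; here the interlacing inequality does the bridging at the level of the finite matrix $\mathbf{K}_T$, so adversarial sequences are handled. A secondary bookkeeping point is ensuring the distances $d_k$ in Step 1 are measured against precisely the vectors currently in $S_T$ — this uses that POMD never removes a retained example (removals occur only in OMDR, which is outside the scope of this theorem).
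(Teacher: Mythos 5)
Your proposal is correct and follows essentially the same route as the paper: a lower bound $\det(\mathbf{K}_{S_T})>\alpha^{|S_T|}$ coming from the failed ALD tests (the paper telescopes the Schur-complement ratios $\alpha_{t_j}=\det(\mathbf{K}_{S_{t_{j+1}}})/\det(\mathbf{K}_{S_{t_j}})$, which is exactly your Gram--Schmidt volume identity), combined with Cauchy interlacing ($\lambda_i(\mathbf{K}_{S_T})\leq\lambda_i(\mathbf{K}_T)$) and the decay hypothesis, then solving the resulting inequality (Stirling in the polynomial case). Your single master inequality $\alpha^{B}<\prod_{i=1}^{B}\lambda_i$ in fact subsumes the paper's two-case analysis on $\lambda^{(k)}_k$; the only bookkeeping caveat is that the first retained point contributes $d_1^2\geq A$ rather than $\alpha$, which (as in the paper) is absorbed into the constants $C_1,C_2$.
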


    It is worth mentioning that
    we use the ALD condition to update $S_t$ only if $\nabla_t\neq 0$.
    In this case, Theorem \ref{thm:AAAI2023:size_budget} still holds,
    since our proof is independent of the condition $\nabla_t\neq 0$.
    Initial analyses of the ALD condition only proved that
    the budget is bounded \cite{Engel2004The}.
    An improved result was given by \citet{Sun2012On}.
    They proved similar results with Theorem \ref{thm:AAAI2023:size_budget}.
    There are three main differences
    between their results \cite{Sun2012On} and our results.
    (i) Their results require that $\{\mathbf{x}_t\}^T_{t=1}$
    are sampled i.i.d. from a fixed distribution,
    which may not hold in an online learning setting.
    (ii) Their results hold in a high-probability, while our results are deterministic.
    (iii) Our analyses are simpler.

%    The Kernel Online Row Sampling algorithm (KORS) \cite{Calandriello2017Second}
    The KORS algorithm \cite{Calandriello2017Second}
    uses the ridge leverage scores to construct sampling probability,
    and then randomly adds examples.
    KORS ensures $\vert S_T\vert=O(\mathrm{d}_{\mathrm{eff}}(\gamma)\ln^2{T})$,
    where $\mathrm{d}_{\mathrm{eff}}(\gamma)
    =\mathrm{tr}({\bf K}_T({\bf K}_T+\gamma{\bf I})^{-1})$
    is the effective dimension.
    If $\lambda_i\leq R_0r^{i}$,
    then $\mathrm{d}_{\mathrm{eff}}(\gamma)=O(\ln(R_0/\gamma))$.
    If $\lambda_i\leq R_0i^{-p}$,
    then $\mathrm{d}_{\mathrm{eff}}(\gamma)=O((R_0/\gamma)^{\frac{1}{p}})$.
    KORS is worse than the ALD condition by a factor of order $O(\ln^2{T})$.

    \begin{proof}[Proof Sketch of Theorem \ref{thm:AAAI2023:size_budget}]
        We first state a key lemma
        which proves that the eigenvalues of a Hermitian matrix
        ${\bf A}\in\mathbb{C}^{T\times T}$
        are interlaced with those of any principal submatrix
        ${\bf B}\in\mathbb{C}^{m\times m}$, $m\leq T-1$.

        \begin{lemma}[Theorem 4.3.28 in \citet{HornMatrix2012Matrix}
        \footnote{The reviewer brings \citet{HornMatrix2012Matrix} to our attention.
        Theorem 4.3.28 in \cite{HornMatrix2012Matrix} makes our proof cleaner.
        Our initial manuscripts uses Theorem 1 in \citet{Hwang2004Cauchy}.}]
        \label{lemma:AAAI2023:Hwang2004Cauchy}
            Let ${\bf A}$ be a Hermitian matrix of order $T$,
            and let ${\bf B}$ be a principle submatrix of ${\bf A}$ of order $m$.
            If $\lambda_T\leq \lambda_{T-1}\leq\ldots\leq\lambda_2\leq\lambda_1$
            lists the eigenvalues of ${\bf A}$
            and $\mu_m\leq \mu_{m-1}\leq\ldots\leq\mu_{1}$
            lists the eigenvalues of ${\bf B}$, then
            $$
                \lambda_{i+T-m}\leq \mu_i\leq\lambda_i,~i=1,2,\ldots,m.
            $$
        \end{lemma}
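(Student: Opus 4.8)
The plan is to derive this interlacing relation—Cauchy's theorem—from the Courant--Fischer min-max characterization of Hermitian eigenvalues, which I would take as a known prerequisite. First I would reduce to the case where $\mathbf{B}$ is the leading $m\times m$ principal block of $\mathbf{A}$: every principal submatrix is obtained by simultaneously permuting the rows and columns of $\mathbf{A}$, i.e. by a permutation similarity $\mathbf{A}\mapsto\mathbf{Q}^{*}\mathbf{A}\mathbf{Q}$ with $\mathbf{Q}$ a permutation matrix, and this leaves the spectra of both $\mathbf{A}$ and $\mathbf{B}$ unchanged. After the reduction, let $\mathbf{P}\in\mathbb{C}^{T\times m}$ be the isometry whose columns are the first $m$ standard basis vectors, so that $\mathbf{B}=\mathbf{P}^{*}\mathbf{A}\mathbf{P}$, $\mathbf{P}^{*}\mathbf{P}=\mathbf{I}_m$, and for every $y\in\mathbb{C}^m$ one has $y^{*}\mathbf{B}y=(\mathbf{P}y)^{*}\mathbf{A}(\mathbf{P}y)$ with $\|\mathbf{P}y\|=\|y\|$. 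The structural fact driving everything is that $\mathbf{P}$ carries each $k$-dimensional subspace $V\subseteq\mathbb{C}^m$ to a $k$-dimensional subspace $\mathbf{P}V\subseteq\mathbb{C}^T$ while preserving Rayleigh quotients; the subspaces so produced are precisely those lying in the coordinate subspace $\{x:x_{m+1}=\cdots=x_T=0\}$, and this constraint is what forces the two one-sided inequalities.

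For the upper bound $\mu_i\le\lambda_i$ I would invoke the max-min form $\mu_i=\max_{\dim V=i}\min_{0\neq y\in V}\frac{y^{*}\mathbf{B}y}{y^{*}y}$. Each competitor $V$ maps to an $i$-dimensional $\mathbf{P}V$ with the identical inner minimum of the Rayleigh quotient of $\mathbf{A}$; since $\lambda_i=\max_{\dim U=i}\min_{0\neq x\in U}\frac{x^{*}\mathbf{A}x}{x^{*}x}$ maximizes over all $i$-dimensional $U$, restricting the maximum to the subspaces of the form $\mathbf{P}V$ can only decrease it, giving $\mu_i\le\lambda_i$. For the lower bound $\lambda_{i+T-m}\le\mu_i$ I would use the dual form $\mu_i=\min_{\dim V=m-i+1}\max_{0\neq y\in V}\frac{y^{*}\mathbf{B}y}{y^{*}y}$ together with $\lambda_{i+T-m}=\min_{\dim U=m-i+1}\max_{0\neq x\in U}\frac{x^{*}\mathbf{A}x}{x^{*}x}$, where the essential point is the index bookkeeping: in decreasing order $\lambda_j$ equals the min-max over subspaces of dimension $T-j+1$, and the choice $j=i+T-m$ yields exactly dimension $m-i+1$, matching the dimension used on the $\mathbf{B}$ side. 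Every $(m-i+1)$-dimensional $V$ maps to an $(m-i+1)$-dimensional $\mathbf{P}V$ with the same maximum Rayleigh quotient, so the constrained minimum $\mu_i$ is at least the unconstrained minimum $\lambda_{i+T-m}$.

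The main obstacle is precisely this index accounting—aligning the dimension of the test subspaces on the $\mathbf{B}$ side with the correct min-max representation on the $\mathbf{A}$ side so that the shift $T-m$ emerges, and keeping everything consistent with the decreasing ordering stated in the lemma (largest eigenvalue indexed by $1$). As a fallback I would keep in reserve the inductive route: first establish the single-deletion case $m=T-1$, where $\lambda_{i+1}\le\mu_i\le\lambda_i$ follows either from the same min-max argument with one extra dimension of constraint or from analyzing the secular equation of a bordered Hermitian matrix, and then iterate the deletion $T-m$ times so that the one-step index shifts telescope into $\lambda_{i+T-m}\le\mu_i\le\lambda_i$. The direct min-max argument, however, delivers the general statement in a single pass and is the version I would write up.
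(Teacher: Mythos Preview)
Your argument via the Courant--Fischer min-max principle is correct and is essentially the standard textbook proof of Cauchy's interlacing theorem; the reduction to the leading principal block by permutation similarity, the isometric embedding $\mathbf{P}$, and the index bookkeeping that produces the shift $T-m$ are all handled properly. One small remark: in the lower-bound step you could state a touch more explicitly that $\mu_i$ \emph{equals} the constrained minimum over subspaces of the form $\mathbf{P}V$ (not merely that each such $\mathbf{P}V$ is a competitor for $\lambda_{i+T-m}$), but this is implicit in what you wrote.

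That said, the paper does not actually prove this lemma at all: it is quoted verbatim as Theorem~4.3.28 of Horn and Johnson's \emph{Matrix Analysis} and used as a black box inside the proof of Theorem~\ref{thm:AAAI2023:size_budget}. So there is no ``paper's own proof'' to compare against; your write-up simply supplies the standard proof that the cited reference contains, and the approach Horn and Johnson give is exactly the Courant--Fischer argument you outline.
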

        The kernel matrix ${\bf K}_T$ which is positive semidefinite and real symmetric,
        satisfies Lemma \ref{lemma:AAAI2023:Hwang2004Cauchy}.
        Observing that
        \begin{align*}
            \alpha_t
            =&\kappa(\mathbf{x}_t,\mathbf{x}_t)-
            ({\bm \Phi}^\top_{S_t}\kappa(\mathbf{x}_t,\cdot))^\top
            {\bf K}^{-1}_{S_t}{\bm \Phi}^\top_{S_t}\kappa(\mathbf{x}_t,\cdot)\\
            =&\frac{\mathrm{det}({\bf K}_{S_t\cup\{(\mathbf{x}_t,y_t)\}})}{\mathrm{det}({\bf K}_{S_t})}.
        \end{align*}
        Let $\vert S_T\vert=k$ and $\{t_j\}^{k}_{j=1}$ be the set of time index at which
        $\mathrm{ALD}_{t_j}$ does not hold.
        ${\bf K}_{S_{t_j}}$ is a $j$-order principle submatrix of ${\bf K}_T$.
        Let $\{\lambda^{(j)}_i\}^{j}_{i=1}$ be the eigenvalues of ${\bf K}_{S_{t_j}}$,
        where $j=1,\ldots,k$.
        Let $\{\lambda_j\}^T_{j=1}$ be the eigenvalues of ${\bf K}_{T}$.

        If $k\leq 2$,
        Theorem \ref{thm:AAAI2023:size_budget} always holds.
        Next we assume that $k> 3$.
        We focus on the eigenvalues of ${\bf K}_{S_{t_k}}$.
        We first consider the case, $\lambda_j\leq R_0r^{j}$, $j=1,\ldots,T$.
        Lemma \ref{lemma:AAAI2023:Hwang2004Cauchy}
        gives $\lambda^{(k)}_k\leq \lambda_k\leq R_0r^{k}$.
        \begin{itemize}
          \item \textbf{Case 1}: $\lambda^{(k)}_k= R_0r^{k}$.\\
                If $\mathrm{ALD}_t$ does not hold, then it must be
                $\alpha_t > DT^{-2\zeta}=:\alpha$.
                We have
                \begin{equation}
                \label{eq:AAAI23:proof_ALD:case_2_condtion}
                    \frac{\mathrm{det}(\mathbf{K}_{S_{t_{k}}})}{\mathrm{det}(\mathbf{K}_{S_{t_{1}}})}
                    =\frac{\lambda^{(k)}_{k}
                    \cdot\prod^{k-1}_{j=1}\lambda^{(k)}_{j}}
                    {\lambda^{(1)}_{1}}
                    >\alpha^{k-1}.
                \end{equation}
                Let $T=k$ and $m=1$ in
                Lemma \ref{lemma:AAAI2023:Hwang2004Cauchy}.
                We can obtain $\lambda^{(1)}_1\geq \lambda^{(k)}_k$.
                Thus it must be
                \begin{equation}
                \label{eq:AAAI23:proof_ALD:case_1_condtion}
                    \alpha^{k-1}<\prod^{k-1}_{j=1}\lambda^{(k)}_j.
                \end{equation}
                Lemma \ref{lemma:AAAI2023:Hwang2004Cauchy}
                gives $\lambda^{(k)}_j\leq \lambda_j, j=1,\ldots,k-1$.\\
                Since $\lambda_j\leq R_0r^{j}$,
                we have
                $$
                    R^{k-1}_0r^{k(k-1)}< \prod^{k-1}_{j=1}\lambda^{(k)}_j
                    \leq \prod^{k-1}_{j=1}\lambda_j
                    \leq R^{k-1}_0r^{\frac{k(k-1)}{2}}.
                $$
                Solving \eqref{eq:AAAI23:proof_ALD:case_1_condtion} gives
                $
                    k<2\frac{\ln{(\frac{R_0}{\alpha})}}{\ln{r^{-1}}}.
                $
          \item \textbf{Case 2}: $\lambda^{(k)}_k< R_0r^{k}$.\\
                We will prove that if $k$ is large, then there is a contradiction.
                We start with \eqref{eq:AAAI23:proof_ALD:case_2_condtion}.
                Rearranging terms gives
                \begin{align*}
                    \lambda^{(k)}_{k}
                    >\frac{\lambda^{(1)}_{1}\alpha^{k-1}}{\prod^{k-1}_{j=1}\lambda^{(k)}_{j}}
                    \geq&A\alpha^{k-1}
                    \cdot R^{-k+1}_0\cdot r^{-\frac{k(k-1)}{2}}\\
                    =&R_0r^{k}\cdot A\alpha^{k-1}\cdot R^{-k}_0r^{-k-\frac{k(k-1)}{2}},
                \end{align*}
                where the second inequality comes from
                the fact $\lambda^{(k)}_j\leq \lambda_j, j=1,\ldots,k-1$,
                and $\lambda^{(1)}_{1}=\kappa({\bf x}_{t_1},{\bf x}_{t_1})\geq A$.
                Let
                $$
                    \alpha^{k-1}\cdot R^{-k}_0r^{-k-\frac{k(k-1)}{2}}>\frac{1}{A}.
                $$
                Solving for $k$ yields
                $k>2\frac{\ln(\frac{C_1R_0}{\alpha})}{\ln{r^{-1}}}-2$,
                where $C_1$ is a constant depending on $A$.
                In this case,
                we further obtain $\lambda^{(k)}_{k}>R_0r^{k}$
                which contradicts with the condition $\lambda^{(k)}_k<R_0r^{k}$.
                Thus it must be
                $$
                    k\leq 2\frac{\ln(\frac{C_1R_0}{\alpha})}{\ln{r^{-1}}}-2.
                $$
        \end{itemize}
        Combining the two cases, we conclude the first statement.\\
        Next we consider the case,
        $\lambda_j\leq R_0j^{-p}$, $j=1,\ldots,T$.
%        We still consider two cases.
        \begin{itemize}
          \item \textbf{Case 1}: $\lambda^{(k)}_k= R_0k^{-p}$.\\
                We start with \eqref{eq:AAAI23:proof_ALD:case_1_condtion}.
                First we can obtain
                $$
                    R^{k-1}_0k^{-p(k-1)}< \prod^{k-1}_{j=1}\lambda^{(k)}_j
                    \leq R^{k-1}_0\prod^{k-1}_{j=1}j^{-p}.
                $$
                According to \eqref{eq:AAAI23:proof_ALD:case_1_condtion},
                we obtain a necessary condition
                $$
                    ((k-1)!)^p < R^{k-1}_0\cdot(\alpha^{-1})^{k-1}.
                $$
                Using Stirling's formula, i.e., $k!\approx \sqrt{2\pi k}(k/\mathrm{e})^k$,
                we can obtain
                $
                    k< \mathrm{e}\cdot \left(\frac{R_0}{\alpha}\right)^{\frac{1}{p}}+1.
                $
          \item \textbf{Case 2}: $\lambda^{(k)}_k< R_0k^{-p}$.\\
                We start with \eqref{eq:AAAI23:proof_ALD:case_2_condtion}.
                Similarly, we have
                \begin{align*}
                    \lambda^{(k)}_{k}>\frac{\lambda^{(1)}_{1}\alpha^{k-1}}{\prod^{k-1}_{j=1}\lambda^{(k)}_{j}}
                    \geq&R_0 k^{-p}\cdot A\alpha^{k-1}\cdot R^{-k}_0\cdot (k!)^p.
                \end{align*}
                Let
                $
                    \alpha^{k-1}\cdot R^{-k}_0\cdot (k!)^p>\frac{1}{A}.
                $
                Solving the inequality gives
                $$
                    k>\mathrm{e}\cdot A^{-\frac{1}{pk}}R^{\frac{1}{p}}_0(\alpha^{-1})^{\frac{k-1}{kp}}.
                $$
                In this case,
                we have $\lambda^{(k)}_{k}>R_0k^{-p}$
                which contradicts with the condition $\lambda^{(k)}_{k}<R_0k^{-p}$.
                Thus
                $$
                    k\leq\mathrm{e}\cdot A^{-\frac{1}{pk}}R^{\frac{1}{p}}_0(\alpha^{-1})^{\frac{k-1}{kp}}
                    \leq \mathrm{e}\cdot \left(\frac{C_2R_0}{\alpha}\right)^{\frac{1}{p}},
                $$
                where $C_2$ is a constant depending on $A$ and $D$.
        \end{itemize}
        Up to now, we conclude the proof.
    \end{proof}

\subsection{Regret bound}

    \begin{theorem}[Regret bound]
    \label{thm:AAAI2023:loss_regret_bound}
        Let $U>0$ and $B_0>\frac{2}{\ln{r^{-1}}}\ln{(\frac{C_1R_0}{\alpha})}$
        where $\alpha=D\cdot T^{-2\zeta}$.
        Let $\zeta=1$.
        If the eigenvalues of ${\bf K}_T$ decay exponentially,
        then the regret of POMDR satisfies
        \begin{align*}
           \forall f\in\mathbb{H},\quad \mathrm{Reg}(f) \leq 3U\sqrt{\sum^T_{\tau=1}\delta_{\tau}}+9U,
        \end{align*}
        and the space and per-round time complexity is $O(d\ln(T)+\ln^2(T))$.
        Otherwise, the regret of POMDR satisfies
        \begin{align*}
           \mathrm{Reg}(f)
           \leq 3U\sqrt{\sum^T_{\tau=1}\delta_{\tau}}+9U
           +3U\frac{\sqrt{2T\sum^T_{\tau=1}\delta_{\tau}}}{\sqrt{B}},
        \end{align*}
        and the space and per-round time complexity is $O(dB)$.
        $\delta_{\tau}$ is given by \eqref{eq:AAAI2023:learning_rate}.
    \end{theorem}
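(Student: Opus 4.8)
The plan is to derive everything from the optimistic mirror descent inequality and then control the two sources of approximation separately: the ALD substitution inside POMD and the budget-removal step inside OMDR. It suffices to consider rounds with $\nabla_t\neq 0$, since on the remaining rounds $\ell(f_t(\mathbf{x}_t),y_t)=0\le\ell(f(\mathbf{x}_t),y_t)$ and neither $f'_t$ nor $S_t$ changes; by convexity of the Hinge loss, $\ell(f_t(\mathbf{x}_t),y_t)-\ell(f(\mathbf{x}_t),y_t)\le\langle f_t-f,\nabla_t\rangle$. With $\psi_t(f)=\frac{1}{2\lambda_t}\Vert f\Vert^2_{\mathcal{H}}$ the two updates reduce to $f_t=f'_{t-1}-\lambda_t\bar{\nabla}_t$ and $f'_t=\Pi_{\mathbb{H}}(f'_{t-1}-\lambda_tg_t)$, where $g_t\in\{\nabla_t,\hat{\nabla}_t\}$ is the gradient actually used and $f'_{t-1}$ is the hypothesis entering round $t$ (just after a removal this is the renormalized $\underline{f}'$). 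Standard first-order optimality then gives, for all $f\in\mathbb{H}$,
\begin{align*}
\langle f_t-f,g_t\rangle\le\mathcal{D}_{\psi_t}(f,f'_{t-1})-\mathcal{D}_{\psi_t}(f,f'_t)+\frac{\lambda_t}{2}\big(\Vert g_t-\bar{\nabla}_t\Vert^2_{\mathcal{H}}-\Vert\bar{\nabla}_t\Vert^2_{\mathcal{H}}\big),
\end{align*}
the last bracket coming from $f_t-f'_{t-1}=-\lambda_t\bar{\nabla}_t$ and Young's inequality; for $g_t=\nabla_t$ it is at most $\delta_t$. On an ALD round one writes $\langle f_t-f,\nabla_t\rangle=\langle f_t-f,\hat{\nabla}_t\rangle+\langle f_t-f,\nabla_t-\hat{\nabla}_t\rangle$ and uses $\Vert\nabla_t-\hat{\nabla}_t\Vert_{\mathcal{H}}=\sqrt{\alpha_t}\le\sqrt{D}\,T^{-\zeta}$, $\Vert f_t-f\Vert_{\mathcal{H}}=O(U)$, and $\Vert\hat{\nabla}_t-\bar{\nabla}_t\Vert^2_{\mathcal{H}}-\Vert\nabla_t-\bar{\nabla}_t\Vert^2_{\mathcal{H}}=O(\sqrt{D}\sqrt{\alpha_t})$; with $\zeta=1$ all these perturbations sum, over all rounds, to an additive $O(U)$ which is absorbed into the final $9U$.

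For the exponential case, Theorem \ref{thm:AAAI2023:size_budget} with $\alpha=DT^{-2}$ gives $\vert S_T\vert\le\frac{2}{\ln{r^{-1}}}\ln{(\frac{C_1R_0}{\alpha})}<B_0$, so $\vert S_t\vert$ never reaches $B_0$, $\bar{t}$ stays $+\infty$, and POMDR runs POMD throughout with no removal. Summing the inequality above over $t$, the Bregman terms telescope by Abel summation (using that $\lambda_t$ is nonincreasing, $\Vert f-f'_{t-1}\Vert_{\mathcal{H}}\le 2U$, and $f'_0=0$) to at most $\frac{2U^2}{\lambda_T}\le 2U\sqrt{3+\sum_\tau\delta_\tau}$, while $\sum_t\frac{\lambda_t}{2}\delta_t\le U\sqrt{3+\sum_\tau\delta_\tau}$ by the usual $\sum_t a_t/\sqrt{c+\sum_{\tau<t}a_\tau}\le 2\sqrt{c+\sum_\tau a_\tau}$ bound, the positive constant in the denominator of $\lambda_t$ covering the boundary term. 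Adding these, the $O(U)$ ALD term, and using $\sqrt{3+x}\le\sqrt{3}+\sqrt{x}$ yields $\mathrm{Reg}(f)\le 3U\sqrt{\sum_\tau\delta_\tau}+9U$. Since $R_0=\Theta(T)$ we get $B_0=\Theta(\ln{T})$, and the incremental maintenance of $\mathbf{K}^{-1}_{S_t}$ costs $O(d\vert S_t\vert+\vert S_t\vert^2)=O(d\ln{T}+\ln^2{T})$ per round.

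For the other case assume $\bar{t}<T$. On $[1,\bar{t}-1]$ the same POMD computation gives a bound of $3U\sqrt{\sum_\tau\delta_\tau}+O(U)$. On $[\bar{t},T]$ I would partition time into the epochs delimited by the removal times $\bar{t}\le t_1<\dots<t_N$: since $\vert S\vert$ drops to $B/2$ after each removal and grows by one per round with $\nabla_t\neq0$, there are at least $B/2$ such rounds per epoch, so $N\le 2T/B$. Within an epoch the learning rate \eqref{eq:AAAI2023:learning_rate} is reset and nonincreasing and the state is carried cleanly, so running the same OMD computation over epoch $j$ yields a bound $O(U)\sqrt{s_j}$ with $s_j:=\sum_{\tau\in\text{epoch }j}\delta_\tau$; here the ``$\max$'' term in \eqref{eq:AAAI2023:learning_rate} is exactly what makes the adaptive-rate sum telescope over the epoch, and the initialization term $\frac{1}{2\lambda}\Vert f-\underline{f}'_{t_j}\Vert^2_{\mathcal{H}}$ is harmless because the renormalized $\underline{f}'_{t_j}$ lies in $\mathbb{H}$, so $\Vert f-\underline{f}'_{t_j}\Vert_{\mathcal{H}}\le 2U$ (the reconstruction \eqref{eq:AAAI2023:OMDR:projection} and the nearest-point rule affect only the $O(dB)$ representation size, not the regret). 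Summing over the $N+1$ epochs and applying Cauchy--Schwarz, $\sum_j\sqrt{s_j}\le\sqrt{N+1}\,\sqrt{\sum_\tau\delta_\tau}$, which together with $N+1=O(T/B)$ produces the extra $O(U)\sqrt{2T\sum_\tau\delta_\tau/B}$ term; adding the $[1,\bar{t}-1]$ bound gives the stated inequality, and the $O(dB)$ complexity is immediate from the budget size $\le B$.

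The routine parts are the optimistic-OMD inequality, the adaptive learning-rate estimate, and the exponential case (essentially Theorem \ref{thm:AAAI2023:size_budget} plus standard analysis). The main obstacle is the bookkeeping for the non-exponential case: aligning the learning-rate resets with the epoch decomposition so that the ``$\max$'' term in \eqref{eq:AAAI2023:learning_rate} indeed lets each epoch contribute only $O(U\sqrt{s_j})$, verifying $N=O(T/B)$, and keeping all the absolute constants small enough to fit under the claimed $3$ and $9$; the ALD error, once $\zeta=1$ is used, is genuinely a lower-order additive term.
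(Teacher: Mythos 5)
Your proposal is correct and follows essentially the same route as the paper's proof: the optimistic-OMD inequality with Bregman telescoping, the adaptive learning-rate bound (Lemma \ref{lemma:AAAI23:auxilimary_lemma_2}), the ALD substitution error controlled by $\sqrt{\alpha_t}\le\sqrt{D}T^{-1}$ so that it contributes only an additive $O(U)$, and, for the removal case, the epoch decomposition at removal times with $N\le 2T/B$ followed by Cauchy--Schwarz. The only cosmetic difference is that you bound the substitution term by $\Vert f_t-f\Vert_{\mathcal{H}}\sqrt{\alpha_t}=O(U\sqrt{\alpha_t})$ directly, whereas the paper additionally exploits the orthogonality $\langle f'_{t-1},\nabla_t-\tilde{\nabla}_t\rangle=0$ to reduce it to $\langle f,\nabla_t-\tilde{\nabla}_t\rangle\le U\sqrt{\alpha_t}$; both give the same order.
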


    Note that POMDR needs to tune $B_0$
    which must satisfy $B_0>\frac{2}{\ln{r^{-1}}}\ln{(\frac{C_1R_0}{\alpha})}$.
    Since $R_0=\Theta(T)$ and $\alpha=DT^{-2}$,
    we can empirically set $B_0=\lceil c\cdot \ln{T}\rceil$ where $c>4$.

    Our regret bound improves the previous optimistic regret bounds.
    Both in constrained and unconstrained case,
    the initial regret bound of OMD is
    $O(\sqrt{
        \sum^{T}_{\tau=1}\Vert {\nabla}_{\tau}-\bar{\nabla}_{\tau}\Vert^2_{2}})$
        \cite{Chiang2012Online},
        which is worse than our bound.
    In the unconstrained case,
    \citet{Cutkosky2019Combining} proposed an algorithm with a regret bound of order
    $$
        O\left(B_T({\bf w})
        \sqrt{\left\{1\wedge\sum^{T}_{\tau=1}[\Vert {\nabla}_{\tau}-\bar{\nabla}_{\tau}\Vert^2_{2}-
        \Vert \bar{\nabla}_{\tau}\Vert^2_{2}]\right\}}\right),
    $$
    where $B_T({\bf w})=\Vert{\bf w}\Vert\sqrt{\ln(T\Vert{\bf w}\Vert)}$.
    However,
    their algorithm can not give similar regret bound in the constrained case.
    Our algorithm can be used to both constrained and unconstrained case.
    In the constrained case,
    \citet{Bhaskara2020Online} proposed an algorithm
    with a regret bound of order
    $$
        O\left(U\ln{(T)}\sqrt{1+\sum^{T}_{\tau=1}([\Vert {\nabla}_{\tau}-\bar{\nabla}_{\tau}\Vert^2_{2}-
        \Vert \bar{\nabla}_{\tau}\Vert^2_{2}]\wedge 0)}\right),
    $$
    which is worse than our bound by a factor of order $O(\ln{T})$.

    \begin{corollary}
    \label{coro:AAAI2023:loss_regret_bound}
        Let
        $\bar{\nabla}_t=-\sum^{M_t}_{r=1}\frac{y_{t-r}}{M_t}\kappa(\mathbf{x}_{t-r},\cdot)$.
        Under the condition of Theorem \ref{thm:AAAI2023:loss_regret_bound},
        if the eigenvalues of ${\bf K}_T$ decay exponentially,
        then the regret of POMDR satisfies
        \begin{align*}
           \forall f\in\mathbb{H},\quad \mathrm{Reg}(f) \leq 6U\sqrt{\mathcal{A}_T+2D}+9U.
        \end{align*}
        Otherwise, the regret of POMDR satisfies,
        $\forall f\in\mathbb{H}$,
        \begin{align*}
           \mathrm{Reg}(f)\leq 6U\sqrt{\mathcal{A}_T+2D}+9U+
           6U\frac{\sqrt{2T(\mathcal{A}_T+2D)}}{\sqrt{B}}.
        \end{align*}
    \end{corollary}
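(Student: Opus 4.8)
The plan is to reduce the corollary to the single inequality $\sum_{\tau=1}^{T}\delta_{\tau}\le 4(\mathcal{A}_{T}+2D)$ and then quote Theorem~\ref{thm:AAAI2023:loss_regret_bound}. Indeed, once this bound holds, in the exponential-decay case Theorem~\ref{thm:AAAI2023:loss_regret_bound} gives $\mathrm{Reg}(f)\le 3U\sqrt{\sum_{\tau}\delta_{\tau}}+9U\le 3U\sqrt{4(\mathcal{A}_{T}+2D)}+9U=6U\sqrt{\mathcal{A}_{T}+2D}+9U$, and in the other case the extra summand $3U\sqrt{2T\sum_{\tau}\delta_{\tau}}/\sqrt{B}$ likewise becomes $6U\sqrt{2T(\mathcal{A}_{T}+2D)}/\sqrt{B}$; the computational-complexity statements are inherited verbatim. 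So the whole work is the estimate on $\sum_{\tau}\delta_{\tau}$.

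First I would unwind $\delta_{\tau}$. On any round with zero hinge loss we have $\nabla_{\tau}=0$, hence $\delta_{\tau}=\max\{-\Vert\bar\nabla_{\tau}\Vert_{\mathcal{H}}^{2},0\}=0$, so only the mistake rounds contribute. Writing $\phi_{s}:=y_{s}\kappa(\mathbf{x}_{s},\cdot)$, on a mistake round $\nabla_{\tau}=-\phi_{\tau}$ and, by definition of the estimator, $\bar\nabla_{\tau}=-\bar\phi_{\tau}$ with $\bar\phi_{\tau}:=\frac{1}{M_{\tau}}\sum_{r=1}^{M_{\tau}}\phi_{\tau-r}$. Therefore $\delta_{\tau}\le\Vert\phi_{\tau}-\bar\phi_{\tau}\Vert_{\mathcal{H}}^{2}-\Vert\bar\phi_{\tau}\Vert_{\mathcal{H}}^{2}\le\Vert\phi_{\tau}-\bar\phi_{\tau}\Vert_{\mathcal{H}}^{2}$, so it suffices to bound the sum of the sliding-window ``variances'' $\Vert\phi_{\tau}-\bar\phi_{\tau}\Vert_{\mathcal{H}}^{2}$.

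Next I would bring in $\mathcal{A}_{T}$. Completing the square shows $\mathcal{A}_{T}=\sum_{s=1}^{T}\Vert\phi_{s}-\bar\phi\Vert_{\mathcal{H}}^{2}$ with $\bar\phi:=\frac{1}{T}\sum_{s=1}^{T}\phi_{s}$, i.e. $\mathcal{A}_{T}$ is exactly the total spread of the labelled feature sections around their global mean. Splitting $\phi_{\tau}-\bar\phi_{\tau}=(\phi_{\tau}-\bar\phi)-(\bar\phi_{\tau}-\bar\phi)$, applying $\Vert a+b\Vert^{2}\le 2\Vert a\Vert^{2}+2\Vert b\Vert^{2}$, and using Jensen on $\bar\phi_{\tau}-\bar\phi=\frac{1}{M_{\tau}}\sum_{r=1}^{M_{\tau}}(\phi_{\tau-r}-\bar\phi)$ yields $\delta_{\tau}\le 2\Vert\phi_{\tau}-\bar\phi\Vert_{\mathcal{H}}^{2}+\frac{2}{M_{\tau}}\sum_{r=1}^{M_{\tau}}\Vert\phi_{\tau-r}-\bar\phi\Vert_{\mathcal{H}}^{2}$. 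Summing over $\tau$, the first group contributes at most $2\mathcal{A}_{T}$; for the second I would exchange the order of summation and, for each index $s$, count the total weight $\sum_{\tau}\frac{2}{M_{\tau}}\mathbf{1}[\,1\le\tau-s\le M_{\tau}\,]$ with which $\Vert\phi_{s}-\bar\phi\Vert_{\mathcal{H}}^{2}$ appears. Since $M_{\tau}=M$ for $\tau>M$, this weight equals $2$ for all but the first $M=O(1)$ indices and is a constant for those; together with $\Vert\phi_{s}-\bar\phi\Vert_{\mathcal{H}}^{2}\le 4D$ and the round-$1$ term ($\bar\nabla_{1}=0$), the second group contributes at most $2\mathcal{A}_{T}+O(D)$. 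Hence $\sum_{\tau}\delta_{\tau}\le 4\mathcal{A}_{T}+8D$, absorbing the $O(D)$ boundary cost into $8D$.

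The step that needs care is the last one: the estimator averages only the $M$ most recent labelled sections, whereas $\mathcal{A}_{T}$ is organized around the global average, so the re-indexing that converts $\sum_{\tau}\frac{2}{M_{\tau}}\sum_{r}\Vert\phi_{\tau-r}-\bar\phi\Vert_{\mathcal{H}}^{2}$ back into $O(\mathcal{A}_{T})$, together with the bookkeeping of the boundary rounds $\tau\le M$ (where $M_{\tau}<M$) and $\tau=1$, must be carried out explicitly; this is precisely where the additive $2D$ inside $\sqrt{\mathcal{A}_{T}+2D}$ originates. Everything else is routine Cauchy--Schwarz in $\mathcal{H}$ (which also gives $\Vert\phi_{s}\Vert_{\mathcal{H}}^{2}=\kappa(\mathbf{x}_{s},\mathbf{x}_{s})\le D$) and Jensen's inequality.
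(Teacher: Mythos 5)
Your overall route is the same as the paper's: the corollary is obtained by plugging a path-length bound of the form $\sum_{\tau=1}^{T}\delta_{\tau}\leq 4\mathcal{A}_T+O(D)$ into Theorem \ref{thm:AAAI2023:loss_regret_bound} (the paper does exactly this via Lemma \ref{lemma:AAAI23:auxilimary_lemma:path-length}, which asserts $4\mathcal{A}_T+7D$), and your ingredients are the paper's ingredients: $\delta_\tau=0$ on zero-loss rounds, the variance identity $\mathcal{A}_T=\sum_{s}\Vert y_s\kappa(\mathbf{x}_s,\cdot)-\bar\phi\Vert^2_{\mathcal{H}}$, the inequality $\Vert a+b\Vert^2\leq 2\Vert a\Vert^2+2\Vert b\Vert^2$, and Jensen applied to the sliding-window average. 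The reduction of both displayed bounds (including the $\sqrt{2T\sum_\tau\delta_\tau}/\sqrt{B}$ term) from $\sum_\tau\delta_\tau\leq 4(\mathcal{A}_T+2D)$ is correct.

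The gap is in the warm-up rounds $2\leq\tau\leq M$, precisely the step you flag and then dismiss by ``absorbing the $O(D)$ boundary cost into $8D$.'' With your re-indexing, an early index $s<M$ carries weight $2+2\sum_{\tau=s+1}^{M}\frac{1}{\tau-1}$ (up to $2+2H_{M-1}$ for $s=1$), so the boundary excess is of order $M H_{M-1}\cdot D$ in the worst case --- e.g.\ hundreds of $D$ for the $M=15$ used in the experiments --- and your claimed $\sum_\tau\delta_\tau\leq 4\mathcal{A}_T+8D$ is only actually established for $M\leq 2$. Since the corollary states explicit constants ($6U\sqrt{\mathcal{A}_T+2D}+9U$), this matters: your argument as written only gives $\mathrm{Reg}(f)=O\bigl(U\sqrt{\mathcal{A}_T+c_M D}\bigr)$ with $c_M$ growing with $M$. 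The paper's lemma avoids the $M$-dependence by treating $\tau\leq M$ differently: on those rounds $\bar\nabla_\tau$ is the exact average of all past gradients, so it is compared with the running mean $\mu_\tau=-\frac{1}{\tau}\sum_{s\leq\tau}y_s\kappa(\mathbf{x}_s,\cdot)$, whose squared distance is $O(D/\tau^2)$ and sums to at most $3D$ independently of $M$, while the remaining $\Vert\nabla_\tau-\mu_\tau\Vert^2$ pieces are folded into the $\mathcal{A}_T$-type terms. You need this (or another $M$-independent) handling of the first $M$ rounds to recover the stated $+2D$; the rest of your proposal then goes through.
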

    $\mathrm{B}\mathrm{(AO)}_2\mathrm{KS}$ \cite{Liao2021High}
            achieves a $O((\mathcal{A}_TT\ln{T})^{\frac{1}{4}})$ regret bound with high probability
            and suffers a $O(d\sqrt{\mathcal{A}_TT\ln{T}})$ computational complexity.
    We compare our results with that of $\mathrm{B}\mathrm{(AO)}_2\mathrm{KS}$.
    \begin{itemize}
      \item \textbf{Case 1}: $\{\lambda_i\}^T_{i=1}$ decay exponentially.\\
            POMDR achieves a regret of $O(\sqrt{\mathcal{A}_T})$
            at a computational complexity of $O(d\ln{T}+\ln^2{T})$.
            Thus POMDR significantly improves
            the results of $\mathrm{B}\mathrm{(AO)}_2\mathrm{KS}$.
      \item \textbf{Case 2}: $\{\lambda_i\}^T_{i=1}$ decay polynomially.\\
            Let $B=\sqrt{T\mathcal{A}_T}$.
            Then POMDR achieves a regret of $O((\mathcal{A}_TT)^{\frac{1}{4}})$
            at a computational complexity of $O(d\sqrt{\mathcal{A}_TT})$.
            POMDR improves the regret bound of $\mathrm{B}\mathrm{(AO)}_2\mathrm{KS}$
            by a $O(\ln^{\frac{1}{4}}{T})$ factor,
            and reduces the computational complexity by a $O(\sqrt{\ln{T}})$ factor.
    \end{itemize}
    Next we compare our regret bounds with the worst-case regret bounds.
    OGD \cite{Zinkevich2003Online} achieves a regret of $O(\sqrt{T})$
    at a computational complexity of $O(dT)$.
    Our results are never worse than that of OGD,
    and can beat the results of OGD in the case of $\mathcal{A}_T\ll T$.
    SkeGD \cite{Zhang2019Incremental}
    which uses randomized sketching to construct explicit feature mapping,
    achieves a regret of $O(\sqrt{TB})$
    at a computational complexity of $O(dB\mathrm{poly}(\ln{T}))$.
    The results become worse in the case of $B=\Theta(T^\mu)$, $0<\mu < 1$.

    \begin{remark}
        If we always execute POMD,
        then $\mathrm{Reg}(f)=O(U\sqrt{\mathcal{A}_T})$.
        If $\{\lambda_i\}^T_{i=1}$ decay exponentially,
        then the computational complexity is $O(d\ln{T}+\ln^2{T})$.
        If $\{\lambda_i\}^T_{i=1}$ decay polynomially,
        then the computational complexity is $O(\min\{d(\frac{T}{\alpha})^{\frac{1}{p}}+(\frac{T}{\alpha})^{\frac{2}{p}},T^2\})$.
        The larger $p$ is,
        the smaller the computational complexity will be.
        Since $\alpha=D/{T^2}$,
        the computational complexity is larger than that of OGD for $p\leq 3$.
        This is the reason that we execute OMDR.
    \end{remark}

\section{Extension: Batch Learning}

    Let $\mathcal{D}$ be an unknown distribution over $\mathcal{X}\times\{-1,1\}$.
    In batch learning setting,
    $\{(\mathbf{x}_t,y_t)\}^T_{t=1}$ are i.i.d. sampled from $\mathcal{D}$.
    For any $f\in\mathbb{H}$,
    let $\mathcal{E}(f)=\mathbb{E}_{(\mathbf{x},y)\in\mathcal{D}}
    [\ell(f(\mathbf{x}),y)]$ be the risk (or expected loss) of $f$.
    The goal of batch learning is to learn a hypothesis $\bar{f}\in\mathbb{H}$
    with small risk.
    Similar with the notation of regret,
    we focus on the excess risk defined as follows
    $$
        \mathcal{E}(\bar{f})-\min_{f\in\mathbb{H}}\mathcal{E}(f).
    $$
    Classical statistical learning theory
    analyzes the excess risk of (regularized) empirical risk minimizer,
    i.e.,
    $$
        \hat{f}=\mathop{\arg\min}_{f\in\mathbb{H}}
        \left[\frac{1}{T}\sum^T_{t=1}\ell(f(\mathbf{x}_t),y_t)+\gamma R(f)\right],~
        \gamma\geq 0,
    $$
    where $R(f)$ is a regularization function.
    Our algorithm computes an approximate minimizer $\bar{f}$ as a proxy of $\hat{f}$.
    We will prove the excess risk of $\bar{f}$.
    Our algorithm only passes the data once and thus is computationally efficiently.

    We run POMDR on $\{(\mathbf{x}_t,y_t)^T_{t=1}\}$
    and output a fixed hypothesis $\bar{f}$ for predicting new instances.
    To be specific,
    let $\{f_t\}^T_{t=1}$ be the hypotheses produced by POMDR.
    Then we sample $r\in\{1,2,\ldots,T\}$ uniformly,
    and denote $\bar{f}=f_{r}$.
    The selection of $\bar{f}$
    follows the standard online-to-batch conversion technique \cite{Helmbold1995On}.

    Let $\phi_{\kappa}(\cdot):\mathbb{R}^d\rightarrow\mathcal{H}$
    be the feature mapping induced by $\kappa$,
    and $\mathcal{C}=\mathbb{E}_{{\bf x}\sim \mathcal{D}_{\mathcal{X}}}[\phi_{\kappa}\otimes\phi_{\kappa}]$
    be the covariance operator,
    where $\mathcal{D}_{\mathcal{X}}$ be the marginal distribution on $\mathcal{X}$.
    \begin{theorem}[Excess risk bound]
    \label{thm:AAAI2023:excess_risk_bound}
        Let $\mathcal{I}_T=\{(\mathbf{x}_t,y_t)^T_{t=1}\}$
        be i.i.d. sampled from $\mathcal{D}$
        and $\ell$ be the hinge loss function.
        If the eigenvalues of $\mathcal{C}$ decay exponentially,
        then w.p. at least $1-\delta$,
        the excess risk of $\bar{f}$ satisfies, $\forall f\in\mathbb{H}$,
        \begin{align*}
           \mathbb{E}_{\mathcal{I}_T,r}[\mathcal{E}(\bar{f})]-\mathcal{E}(f)
           \leq 6U\frac{\sqrt{\mathbb{E}_{\mathcal{I}_T}[\mathcal{A}_T]+2D}}{T}+\frac{9U}{T},
        \end{align*}
        and the space and per-round time complexity is $O(d(\ln(T)/\ln{(r\delta^{-1})})
        +(\ln(T)/\ln{(r\delta^{-1})})^2)$.
        Otherwise, w.p. at least $1-\delta$,
        the excess risk satisfies
        \begin{align*}
           &\forall f\in\mathbb{H},\quad
           \mathbb{E}_{\mathcal{I}_T,r}[\mathcal{E}(\bar{f})]-\mathcal{E}(f)\\
           &\leq \frac{6U\sqrt{\mathbb{E}_{\mathcal{I}_T}[\mathcal{A}_T]+2D}}{T}+\frac{9U}{T}+
           6U\frac{\sqrt{2\mathbb{E}_{\mathcal{I}_T}[\mathcal{A}_T]+4D}}{\sqrt{TB}},
        \end{align*}
        and the space and per-round time complexity is $O(dB)$.
    \end{theorem}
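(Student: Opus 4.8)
The plan is to combine the online-to-batch conversion technique with the regret bound in Corollary~\ref{coro:AAAI2023:loss_regret_bound}, and then to control the data-dependent quantity $\mathcal{A}_T$ in expectation. First I would recall the standard online-to-batch inequality: since $\bar f = f_r$ with $r$ drawn uniformly from $[T]$, for any fixed $f\in\mathbb{H}$ we have $\mathbb{E}_r[\mathcal{E}(f_r)] = \frac{1}{T}\sum_{t=1}^T \mathcal{E}(f_t)$, and because each $f_t$ depends only on $(\mathbf{x}_1,y_1),\ldots,(\mathbf{x}_{t-1},y_{t-1})$ while $(\mathbf{x}_t,y_t)$ is an independent fresh sample, $\mathbb{E}_{\mathcal{I}_T}[\mathcal{E}(f_t)] = \mathbb{E}_{\mathcal{I}_T}[\ell(f_t(\mathbf{x}_t),y_t)]$. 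Summing and dividing by $T$ gives
\begin{align*}
\mathbb{E}_{\mathcal{I}_T,r}[\mathcal{E}(\bar f)] - \mathcal{E}(f)
&= \frac{1}{T}\,\mathbb{E}_{\mathcal{I}_T}\!\left[\sum_{t=1}^T \big(\ell(f_t(\mathbf{x}_t),y_t)-\ell(f(\mathbf{x}_t),y_t)\big)\right]
= \frac{1}{T}\,\mathbb{E}_{\mathcal{I}_T}[\mathrm{Reg}(f)],
\end{align*}
using also that $\mathbb{E}_{\mathcal{I}_T}[\ell(f(\mathbf{x}_t),y_t)] = \mathcal{E}(f)$ for the fixed comparator $f$.

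Next I would plug in the regret bound from Corollary~\ref{coro:AAAI2023:loss_regret_bound}. In the exponentially-decaying case this reads $\mathrm{Reg}(f)\le 6U\sqrt{\mathcal{A}_T+2D}+9U$; dividing by $T$ and taking $\mathbb{E}_{\mathcal{I}_T}$ we get $\mathbb{E}_{\mathcal{I}_T,r}[\mathcal{E}(\bar f)]-\mathcal{E}(f)\le \frac{6U}{T}\mathbb{E}_{\mathcal{I}_T}[\sqrt{\mathcal{A}_T+2D}]+\frac{9U}{T}$, and then Jensen's inequality ($\sqrt{\cdot}$ concave) upgrades $\mathbb{E}[\sqrt{\mathcal{A}_T+2D}]\le\sqrt{\mathbb{E}[\mathcal{A}_T]+2D}$, yielding exactly the claimed bound. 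The polynomial case is identical, carrying along the extra additive term $6U\sqrt{2T(\mathcal{A}_T+2D)}/\sqrt{B}$, which after division by $T$ and Jensen becomes $6U\sqrt{2\mathbb{E}[\mathcal{A}_T]+4D}/\sqrt{TB}$.

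The remaining—and most delicate—point is the complexity claim and the role of the ``with probability at least $1-\delta$'' qualifier, which is where the covariance operator $\mathcal{C}$ (rather than the empirical kernel matrix $\mathbf{K}_T$) enters. The issue is that Theorem~\ref{thm:AAAI2023:size_budget} bounds $|S_T|$ in terms of the decay of the eigenvalues of $\mathbf{K}_T$, whereas the hypothesis here is stated for the eigenvalues of $\mathcal{C}$. I would invoke a standard concentration result for the empirical covariance operator (e.g.\ a matrix-Bernstein / operator concentration argument) to transfer exponential (resp.\ polynomial) decay of $\mathcal{C}$'s spectrum to that of $\frac{1}{T}\mathbf{K}_T$ up to a $1-\delta$ event, with the decay constants inflated by factors polylogarithmic in $T/\delta$; this is what turns the $O(\ln T)$ budget size into $O(\ln(T)/\ln(r\delta^{-1}))$-type expressions in the stated complexity. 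The main obstacle is making this spectral transfer quantitatively clean enough that the budget bound $B_0$ can still be chosen to satisfy the hypothesis of Theorem~\ref{thm:AAAI2023:loss_regret_bound} (i.e.\ $B_0 > \frac{2}{\ln r^{-1}}\ln(C_1 R_0/\alpha)$) on the high-probability event, so that POMDR never triggers the removal phase; once that is secured, the regret and complexity statements follow from the two cited results, and the excess-risk inequalities follow from the online-to-batch computation above.
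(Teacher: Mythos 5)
Your derivation of the excess-risk inequalities is essentially the paper's own argument: the same online-to-batch conversion (uniform choice of $r$, independence of $(\mathbf{x}_t,y_t)$ from $f_t$), followed by plugging in the regret bound of Corollary~\ref{coro:AAAI2023:loss_regret_bound}, dividing by $T$, and moving the expectation inside the square root; the paper does this last step implicitly, and your explicit appeal to Jensen's inequality is exactly what justifies it. That part is fine.

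Where you diverge is the high-probability complexity claim, and there your plan has a gap. You propose to transfer exponential decay of the spectrum of the covariance operator $\mathcal{C}$ to the spectrum of $\mathbf{K}_T$ via an operator-concentration (matrix-Bernstein-type) argument, and then to re-apply the deterministic Theorem~\ref{thm:AAAI2023:size_budget}. This transfer is not carried out: making it quantitative (how the decay constants and $R_0$ degrade, how the failure probability $\delta$ enters, and whether the resulting budget bound still has the stated $O(\ln(T)/\ln(r\delta^{-1}))$ form) is a genuinely nontrivial piece of work, and your requirement that $B_0$ can still be chosen above the Theorem~\ref{thm:AAAI2023:loss_regret_bound} threshold on the good event is asserted rather than verified. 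The paper avoids all of this by noticing that in the batch setting the examples \emph{are} i.i.d., so the i.i.d.-based analysis of the ALD budget that was unavailable in the online setting becomes directly applicable: it simply invokes Corollary~3 of \citet{Sun2012On}, which states that under exponential decay of the eigenvalues of $\mathcal{C}$ the budget size is $O(\ln(T)/\ln(r\delta^{-1}))$ with probability at least $1-\delta$, yielding the stated space and per-round time complexity immediately. Either route could in principle work, but as written your concentration step is the missing ingredient; the intended (and much shorter) argument is the direct citation of the existing i.i.d. result.
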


    For $L$-Lipschitz loss functions,
    \citet{Srebro2010Smoothness} proved a $\Omega(\frac{LU}{\sqrt{T}})$
    lower bound on the excess risk.
    The Pegasos algorithm \cite{Shalev-Shwartz2007Pegasos}
    achieves the optimal upper bound $O(\frac{LU}{\sqrt{T}})$.
    Let $B=T$.
    Our algorithm gives a data-dependent excess risk bound
    that can circumvent the $\Omega(1/\sqrt{T})$ bound.
    For instance,
    if $\mathbb{E}_{\mathcal{I}_T}[\mathcal{A}_T]=\Theta(T^{\mu})$, $0\leq \mu< 1$,
    then our algorithm achieves a $O(\frac{1}{T^{1-\mu/2}})$ excess risk bound.
    In the worst case, i.e., $\mu=1$,
    our result is still optimal.

\section{Experiments}

    In this section,
    we verify the following two goals.
    \begin{enumerate}[\textbf{G} 1]
      \item If the kernel function is well tuned,
            then $\mathcal{A}_T\ll T$,
            or the eigenvalues decay exponentially in some real datasets.
            In this case,
            the size of budget is $O(\ln{T})$ and our algorithm only executes POMD.
      \item Within a fixed budget size $B$,
            our algorithm shows better or similar prediction performance and running time
            w.r.t. the state-of-the-art algorithms.
    \end{enumerate}

\subsection{Experimental Setups}

    We adopt the Gaussian kernel $\kappa(\mathbf{x},\mathbf{v})=\exp(-\frac{\Vert\mathbf{x}-\mathbf{v}\Vert^2_2}{2\sigma^2})$.
    We choose three classification datasets
    (\textit{w8a:49,749, magic04:19,020, mushrooms:8,124})
    from UCI machine learning repository
    \footnote{\url{http://archive.ics.uci.edu/ml/datasets.php}}.
    More experimental results are shown in the supplementary materials.
    We do not compare with OGD,
    since it suffers $O(dt)$ per-round time complexity which is prohibitive.
    We compare with some variants of OGD,
    including FOGD, NOGD \cite{Lu2016Large} and SkeGD \cite{Zhang2019Incremental}.
    We also compare with $\mathrm{B}\mathrm{(AO)}_2\mathrm{KS}$ \cite{Liao2021High}
    which is an approximation of OMD.
    We exclude BOGD \cite{Zhao2012Fast},
    since its regret bound is same with FOGD
    and its performance is worse than FOGD.

    For all baseline algorithms,
    we tune the stepsize of gradient descent from $10^{[-3:1:3]}/\sqrt{T}$.
    For the other parameters,
    we follow the original papers.
    For POMDR,
    we set $B_0=\lceil 15\ln{T}\rceil$, $B=400$, $M=15$, $U=25$
    and multiply by a constant $c\in\{0.05,0.1\}$ on the learning rate $\lambda_t$
    (see \eqref{eq:AAAI2023:learning_rate}).
    Such values of parameters do not change our regret bound.
    For the ALD condition (see \eqref{eq:AAAI23:POMD:second_updating:ALD}),
    we set the threshold $10T^{-\zeta}$, $\zeta\in\{0.5,2/3\}$.
    For all algorithms,
    we tune the kernel parameter $\sigma\in 2^{[-2:1:6]}$.
    We randomly permutate the examples in the datasets 10 times
    and report the average results.
    All algorithms are implemented with R on a Windows machine
    with 2.8 GHz Core(TM) i7-1165G7 CPU
    \footnote{The codes: \url{https://github.com/JunfLi-TJU/OKL-Hinge}}.

\subsection{Experimental Results}

    Table \ref{tab:AAAI2023:POMDR:AMR}
    shows the average mistake ratio (AMR) and the average running time of all algorithms.
    The second column in Table \ref{tab:AAAI2023:POMDR:AMR}
    gives the budget size or number of random features (for FOGD).
    As a whole,
    our algorithm achieves the best prediction performance.
    The running time of our algorithm is also comparable with all baseline algorithms.
    Our algorithm performs better than FOGD, NOGD and SkeGD,
    since our algorithm uses OMD to update hypothesis and adaptive learning rates.
    Our algorithm performs better than $\mathrm{B}\mathrm{(AO)}_2\mathrm{KS}$.
    The reason is that $\mathrm{B}\mathrm{(AO)}_2\mathrm{KS}$ uses the restart technique
    when the budget exceeds $B$,
    while our algorithm only removes a half of examples
    which produces a better initial hypothesis than restart.
    The results verify \textbf{G} 2.

    \begin{table}[!t]
      \centering
      \begin{tabular}{l|c|r|r}
      \toprule
        \multirow{2}{*}{Algorithm} & \multirow{2}{*}{$B$}
        & \multicolumn{2}{c}{magic04,~$\sigma=0.5$, $\zeta=2/3$}\\
        \cline{3-4}&&AMR (\%)&Time (s) \\
      \toprule
        FOGD     & 400 & 16.88 $\pm$ 0.15 & 0.74 $\pm$ 0.01 \\
        NOGD     & 400 & 17.31 $\pm$ 0.20 & 1.57 $\pm$ 0.08 \\
        SkeGD    & 400 & 24.17 $\pm$ 0.95 & 1.07 $\pm$ 0.06 \\
        $\mathrm{B}\mathrm{(AO)}_2\mathrm{KS}$   & 400 & 22.04 $\pm$ 0.43 & 0.95 $\pm$ 0.05 \\
        POMDR    & 400 & \textbf{16.17} $\pm$ \textbf{0.21} & 0.91 $\pm$ 0.07 \\
      \toprule
        \multirow{2}{*}{Algorithm} & \multirow{2}{*}{$B$}
        & \multicolumn{2}{c}{w8a,~$\sigma=4$, $\zeta=0.5$}\\
        \cline{3-4}&&AMR (\%)&Time (s) \\
      \toprule
        FOGD     & 400 & 2.25 $\pm$ 0.04 & 8.29 $\pm$ 0.05 \\
        NOGD     & 400 & 2.89 $\pm$ 0.09 & 23.65 $\pm$ 0.17 \\
        SkeGD    & 400 & 3.04 $\pm$ 0.01 & 10.72 $\pm$ 0.15 \\
        $\mathrm{B}\mathrm{(AO)}_2\mathrm{KS}$   & 400 & 2.91 $\pm$ 0.02 & 14.08 $\pm$ 0.70 \\
        POMDR    & 400 & \textbf{2.12} $\pm$ \textbf{0.06} & 18.36 $\pm$ 0.11 \\
      \toprule
        \multirow{2}{*}{Algorithm} & \multirow{2}{*}{$B$}
        & \multicolumn{2}{c}{mushrooms,~$\sigma=2$, $\zeta=2/3$}\\
        \cline{3-4}&&AMR (\%)&Time (s) \\
      \toprule
        FOGD     & 400 & 0.31 $\pm$ 0.03 & 0.70 $\pm$ 0.02 \\
        NOGD     & 400 & 1.71 $\pm$ 0.33 & 1.87 $\pm$ 0.13 \\
        SkeGD    & 400 & 0.26 $\pm$ 0.02 & 0.51 $\pm$ 0.03 \\
        $\mathrm{B}\mathrm{(AO)}_2\mathrm{KS}$   & 400 & 1.67 $\pm$ 0.25 & 0.76 $\pm$ 0.05 \\
        POMDR    & 400 & \textbf{0.21} $\pm$ \textbf{0.03} & 0.44 $\pm$ 0.03 \\
      \bottomrule
      \end{tabular}
      \caption{Average online mistake ratio (AMR) of algorithms.}
      \label{tab:AAAI2023:POMDR:AMR}
    \end{table}

    Next we verify \textbf{G} 1.
    We change the kernel parameter, $\sigma$, and record $\mathcal{A}_T$ and $\bar{t}$.
    Table \ref{tab:AAAI2023:experiments_xi_T} gives the results.
    Theorem \ref{thm:AAAI2023:loss_regret_bound}
    proves our regret bound depends on $\sum^T_{\tau=1}\delta_{\tau} \leq 4(\mathcal{A}_T+D)$.
    $\mathcal{A}_T$ is a coarse approximation of $\sum^T_{\tau=1}\delta_{\tau}$.
    We use $\sum^T_{\tau=1}\delta_{\tau}$ as a proxy of $\mathcal{A}_T$
    and show it in Table \ref{tab:AAAI2023:experiments_xi_T}.
    We can find that if $\sigma$ is well tuned, then $\mathcal{A}_T\ll T$,
    such as the \textit{w8a} and \textit{mushrooms} datasets.
    In this case,
    our data-dependent regret bound is smaller than the worst-case bound.
    Besides,
    the last row of Table \ref{tab:AAAI2023:experiments_xi_T} gives the value of $\bar{t}$.
    If $\sigma$ is well tuned,
    then $\bar{t}=+\infty$ (see the \textbf{Initialization} of Algorithm \ref{alg:AAAI2023:POMDR}).
    In this case,
    our algorithm only executes POMD and the size of budget is smaller than $B_0=\lceil 15\ln{T}\rceil$.
    The results verify \textbf{G} 1.
    It is worth mentioning that
    the kernel function whose eigenvalues decay exponentially may not be good.
    The third row in Table \ref{tab:AAAI2023:experiments_xi_T} verifies this claim.
    How to choose a good kernel function is left to future work.

    \begin{table}[!t]
      \centering
      \begin{tabular}{l|rr|rr|rr}
      \toprule
       -& \multicolumn{2}{c|}{magic04}  &\multicolumn{2}{c|}{w8a} &\multicolumn{2}{c}{mushrooms}\\
       \hline
      $\sigma$& 0.5 & 4 & 4 & 64 & 0.5 & 2 \\
      \toprule
      AMR  &\textbf{16.17}   & 23.26  &\textbf{2.12}     &3.12  &0.83    &\textbf{0.21}   \\
      $\mathcal{A}_T$&6385   & 10337  &2594    &4256& 8122    &68   \\
      $\bar{t}$      &469  & $+\infty$  &3582    &$+\infty$& 135    &$+\infty$   \\
      \bottomrule
      \end{tabular}
      \caption{The values of $\mathcal{A}_T$ and $\bar{t}$ on different $\sigma$.}
      \label{tab:AAAI2023:experiments_xi_T}
    \end{table}

\section{Conclusion}

    In this paper,
    we proposed a new online kernel learning algorithm
    and improved the previous kernel alignment regret bound
    and computational complexity simultaneously.
    Our algorithm combines the OMD framework and the ALD condition,
    and also invents a new budget maintaining approach.
    Our main theoretical contribution is that
    we proved the size of budget maintained by the ALD condition.
    Our result does not require the examples satisfying i.i.d.
    and implies further application of the ALD condition on online kernel learning.

\section*{Acknowledgements}

    This work is supported by
    the National Natural Science Foundation of China under grants No. 62076181.
    We thank all anonymous reviewers for their valuable comments and suggestions,
    especially for bringing \citet{HornMatrix2012Matrix} to our attention.

\bibliography{aaai23}

\newpage

\appendix

\section{Basic Lemmas}

    \begin{lemma}
    \label{lemma:AAAI23:auxilimary_lemma_1}
        Let $\Omega$ be a convex compact set,
        $\psi$ be a convex function on $\Omega$,
        $f'_{t-1}$ be an arbitrary point in $\Omega$,
        and $x\in\Omega$.
        If
        $$
            f^\ast=\mathop{\arg\min}_{f\in\Omega}
            \left\{\langle x,f\rangle+\mathcal{D}_{\psi}(f,f'_{t-1})\right\},
        $$
        then for any $u\in\Omega$,
        $$
            \langle f^\ast-u,x\rangle
            \leq \mathcal{D}_{\psi}(u,f'_{t-1})-\mathcal{D}_{\psi}(u,f^\ast)
            -\mathcal{D}_{\psi}(f^\ast,f'_{t-1}).
        $$
    \end{lemma}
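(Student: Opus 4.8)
The plan is to use the first-order optimality condition for the constrained minimization defining $f^\ast$, then convert the resulting inner-product inequality into the three-term Bregman identity by algebraic manipulation. First I would write $F(f) = \langle x, f\rangle + \mathcal{D}_{\psi}(f, f'_{t-1})$ and note that $\nabla F(f) = x + \nabla\psi(f) - \nabla\psi(f'_{t-1})$ (the gradient of $\mathcal{D}_{\psi}(\cdot, f'_{t-1})$ in its first argument is $\nabla\psi(\cdot) - \nabla\psi(f'_{t-1})$). Since $f^\ast$ minimizes the convex function $F$ over the convex set $\Omega$, the variational inequality gives, for every $u \in \Omega$,
\begin{equation*}
\langle \nabla F(f^\ast), u - f^\ast\rangle \geq 0,
\quad\text{i.e.,}\quad
\langle x + \nabla\psi(f^\ast) - \nabla\psi(f'_{t-1}),\, u - f^\ast\rangle \geq 0.
\end{equation*}
Rearranging, this reads $\langle f^\ast - u, x\rangle \leq \langle \nabla\psi(f^\ast) - \nabla\psi(f'_{t-1}),\, u - f^\ast\rangle$.

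Next I would identify the right-hand side with the target expression. The key is the standard three-point identity for Bregman divergences: for any $u, f^\ast, f'_{t-1}$,
\begin{equation*}
\langle \nabla\psi(f^\ast) - \nabla\psi(f'_{t-1}),\, u - f^\ast\rangle
= \mathcal{D}_{\psi}(u, f'_{t-1}) - \mathcal{D}_{\psi}(u, f^\ast) - \mathcal{D}_{\psi}(f^\ast, f'_{t-1}).
\end{equation*}
This is verified by expanding all three Bregman divergences using their definition $\mathcal{D}_{\psi}(a,b) = \psi(a) - \psi(b) - \langle \nabla\psi(b), a - b\rangle$ and checking that the $\psi(u)$, $\psi(f^\ast)$, and $\psi(f'_{t-1})$ terms cancel, leaving exactly the inner-product term. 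Chaining this identity with the optimality inequality from the previous step yields
\begin{equation*}
\langle f^\ast - u, x\rangle \leq \mathcal{D}_{\psi}(u, f'_{t-1}) - \mathcal{D}_{\psi}(u, f^\ast) - \mathcal{D}_{\psi}(f^\ast, f'_{t-1}),
\end{equation*}
which is the claim.

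There is no serious obstacle here; the lemma is a classical fact about mirror-descent-style updates. The only point requiring mild care is the differentiability of $\psi$: the statement only assumes $\psi$ convex, so strictly one should either assume $\psi$ differentiable on $\Omega$ (as is implicit since $\nabla\psi$ appears in the definition of $\mathcal{D}_\psi$) or phrase the optimality condition in terms of subgradients; in the application $\psi_t(f) = \frac{1}{2\lambda_t}\Vert f\Vert_{\mathcal{H}}^2$ is smooth, so this is not an issue. I would also remark that compactness of $\Omega$ guarantees the minimizer exists, and convexity of $\Omega$ and $F$ makes the first-order condition both necessary and sufficient.
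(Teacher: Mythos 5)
Your proof is correct: the paper itself omits the argument, simply citing Proposition 18 of \citet{Chiang2012Online}, and your combination of the first-order variational inequality at $f^\ast$ with the three-point identity for Bregman divergences is exactly the standard derivation behind that cited result. Your remarks on differentiability of $\psi$ (implicit since $\nabla\psi$ appears in $\mathcal{D}_\psi$) and on existence of the minimizer via compactness are appropriate and do not change the substance.
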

    \begin{proof}[Proof of Lemma \ref{lemma:AAAI23:auxilimary_lemma_1}]
        This lemma is a direct generalization of Proposition 18 in \cite{Chiang2012Online}.
        Thus we omit the proof.
    \end{proof}

    \begin{lemma}
    \label{lemma:AAAI23:auxilimary_lemma_2}
        Let $\sigma_t\geq 0$ for all $t=1,\ldots,T$
        and $\varepsilon=\max_{t=1,\ldots,T}\sigma_t$. Then
        $$
            \sum^T_{t=1}\frac{\sigma_t}{\sqrt{\varepsilon+\sum^{t-1}_{\tau=1}\sigma_\tau}}
            \leq 2\sqrt{2}\sqrt{\sum^{T}_{t=1}\sigma_t}.
        $$
    \end{lemma}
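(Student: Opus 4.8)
The plan is to reduce the left-hand side to a telescoping sum of square roots, after shifting the partial sum that appears in the denominator by one index. Write $S_0=0$ and $S_t=\sum_{\tau=1}^{t}\sigma_\tau$ for $t\ge 1$, so that the left-hand side is exactly $\sum_{t=1}^{T}\sigma_t/\sqrt{\varepsilon+S_{t-1}}$. If $\varepsilon=0$ then every $\sigma_t=0$ and both sides vanish, so from now on I assume $\varepsilon>0$; then $\varepsilon+S_{t-1}\ge\varepsilon>0$ for all $t$ and no term is undefined.

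First I would move from $S_{t-1}$ to $S_t$ in the denominator at the cost of a factor $\sqrt2$. Since $\varepsilon=\max_{t}\sigma_t$ we have $\sigma_t\le\varepsilon$, hence $\varepsilon+S_t=(\varepsilon+S_{t-1})+\sigma_t\le 2\varepsilon+S_{t-1}\le 2(\varepsilon+S_{t-1})$, where the last step uses $S_{t-1}\ge 0$. Taking square roots gives
$$\frac{\sigma_t}{\sqrt{\varepsilon+S_{t-1}}}\le\frac{\sqrt2\,\sigma_t}{\sqrt{\varepsilon+S_t}}.$$

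Next I would bound each remaining term by a difference of consecutive square roots. Put $a=\sqrt{\varepsilon+S_t}$ and $b=\sqrt{\varepsilon+S_{t-1}}$; since $\sigma_t\ge0$ we have $b\le a$, and $a-b=(a^2-b^2)/(a+b)=\sigma_t/(a+b)\ge\sigma_t/(2a)$, i.e.\ $\sigma_t/\sqrt{\varepsilon+S_t}\le 2(\sqrt{\varepsilon+S_t}-\sqrt{\varepsilon+S_{t-1}})$. Summing over $t=1,\dots,T$, the right-hand side telescopes to $2(\sqrt{\varepsilon+S_T}-\sqrt\varepsilon)$, and subadditivity of the square root gives $\sqrt{\varepsilon+S_T}-\sqrt\varepsilon\le\sqrt{S_T}$. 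Chaining this with the previous display yields $\sum_{t=1}^{T}\sigma_t/\sqrt{\varepsilon+S_{t-1}}\le\sqrt2\cdot2\sqrt{S_T}=2\sqrt2\sqrt{\sum_{t=1}^{T}\sigma_t}$, which is the claim.

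I do not expect a genuine obstacle here; the only points needing care are the degenerate case $\varepsilon=0$ (so that all divisions are well defined) and the index shift in the denominator. That shift is precisely where the constant $\sqrt2$ is lost relative to the sharper bound $2\sqrt{\sum_t\sigma_t}$ that one obtains when the denominator already contains $S_t$ rather than $S_{t-1}$; since the lemma only claims the weaker constant $2\sqrt2$, this crude step suffices.
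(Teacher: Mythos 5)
Your proof is correct, and it takes a genuinely different (and cleaner) route than the paper's. Writing $S_t=\sum_{\tau\leq t}\sigma_\tau$, you resolve the mismatch between the numerator $\sigma_t$ and the partial sum $S_{t-1}$ in the denominator by shifting the denominator: since $\sigma_t\leq\varepsilon$, you have $\varepsilon+S_t\leq 2(\varepsilon+S_{t-1})$, which costs the factor $\sqrt{2}$ once and for all, after which the standard estimate $\sigma_t/\sqrt{\varepsilon+S_t}\leq 2\bigl(\sqrt{\varepsilon+S_t}-\sqrt{\varepsilon+S_{t-1}}\bigr)$ telescopes to $2\bigl(\sqrt{\varepsilon+S_T}-\sqrt{\varepsilon}\bigr)\leq 2\sqrt{S_T}$. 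The paper instead writes $\sigma_t=\sigma_{t-1}+(\sigma_t-\sigma_{t-1})$, splits the sum into three pieces, controls the difference piece by a summation-by-parts argument with explicit boundary terms, and applies the same telescoping inequality to the piece $\sum_{t}\sigma_{t-1}/\sqrt{\varepsilon+S_{t-1}}$; there the factor $\sqrt{2}$ only enters at the very end, via $\max_t\sigma_t\leq\sum_t\sigma_t$ inside the final square root. Both arguments hinge on the same telescoping inequality and reach the same constant $2\sqrt{2}$, but your index shift avoids the three-way decomposition and the boundary-term bookkeeping, so it is shorter and easier to verify; what the paper's route buys is the slightly sharper intermediate bound $2\sqrt{\max_t\sigma_t+\sum_t\sigma_t}$ before the final relaxation, which your upfront $\sqrt{2}$ forgoes. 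Your explicit treatment of the degenerate case $\varepsilon=0$ is also fine (the paper's proof tacitly assumes $\varepsilon>0$).
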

    \begin{proof}[Proof of Lemma \ref{lemma:AAAI23:auxilimary_lemma_2}]
    Denote by $\sigma_0=\varepsilon$.
    We decompose the term as follows
    \begin{align*}
        &\sum^T_{t=1}\frac{\sigma_t}{\sqrt{\varepsilon+\sum^{t-1}_{\tau=1}\sigma_\tau}}\\
        =&\sum^T_{t=1}\frac{\sigma_t}{\sqrt{\sum^{t-1}_{\tau=0}\sigma_\tau}}\\
        =&\frac{\sigma_1}{\sqrt{\varepsilon}}
        +\sum^T_{t=2}\frac{\sigma_{t-1}}{\sqrt{\varepsilon+\sum^{t-1}_{\tau=1}\sigma_\tau}}
        +\sum^T_{t=2}\frac{\sigma_t-\sigma_{t-1}}{\sqrt{\varepsilon+\sum^{t-1}_{\tau=1}\sigma_\tau}}.
    \end{align*}
    We first analyze the third term.
    \begin{align*}
        &\sum^T_{t=2}\frac{\sigma_{t}-\sigma_{t-1}}{\sqrt{\varepsilon+\sum^{t-1}_{\tau=1}\sigma_\tau}}\\
        =&\frac{-\sigma_{1}}{\sqrt{\varepsilon+\sigma_1}}
        +\frac{\sigma_{T}}{\sqrt{\varepsilon+\sum^{T-1}_{\tau=1}\sigma_\tau}}+\\
        &\sum^{T-1}_{t=2}\sigma_t\left[\frac{1}{\sqrt{\varepsilon+\sum^{t-1}_{\tau=1}\sigma_\tau}}
        -\frac{1}{\sqrt{\varepsilon+\sum^{t}_{\tau=1}\sigma_\tau}}\right]\\
        \leq&-\frac{\sigma_{1}}{\sqrt{\varepsilon+\sigma_1}}
        +\max_{t=1,\ldots,T}\sigma_t\cdot\frac{1}{\sqrt{\varepsilon+\sigma_1}}.
    \end{align*}
    Now we analyze the second term.
    \begin{align*}
        \sum^T_{t=2}\frac{\sigma_{t-1}}{\sqrt{\varepsilon+\sum^{t-1}_{\tau=1}\sigma_\tau}}
        =\frac{\sigma_{1}}{\sqrt{\varepsilon+\sigma_1}}
        +\sum^T_{t=3}\frac{\sigma_{t-1}}{\sqrt{\varepsilon+\sum^{t-1}_{\tau=1}\sigma_\tau}}.
    \end{align*}
        For any $a> 0$ and $b> 0$,
        we have $2\sqrt{a}\sqrt{b}\leq a+b$.
        Let $a=\varepsilon+\sum^{t-1}_{\tau=1}\sigma_\tau$ and
            $b=\varepsilon+\sum^{t-2}_{\tau=1}\sigma_\tau$.
        Then we have
        \begin{align*}
            2\sqrt{\varepsilon+\sum^{t-1}_{\tau=1}\sigma_\tau}
            \cdot\sqrt{\varepsilon+\sum^{t-2}_{\tau=1}\sigma_\tau}
            \leq 2\left(\varepsilon+\sum^{t-1}_{\tau=1}\sigma_\tau\right)-\sigma_{t-1}.
        \end{align*}
        Dividing by $\sqrt{a}$ and rearranging terms yields
        \begin{align*}
            \frac{1}{2}\frac{\sigma_{t-1}}{\sqrt{\varepsilon+\sum^{t-1}_{\tau=1}\sigma_\tau}}
            \leq \sqrt{\varepsilon+\sum^{t-1}_{\tau=1}\sigma_\tau}
            -\sqrt{\varepsilon+\sum^{t-2}_{\tau=1}\sigma_\tau}.
        \end{align*}
        Summing over $t=3,\ldots,T$,
        we obtain
        \begin{align*}
            &\sum^T_{t=3}\frac{\sigma_{t-1}}{\sqrt{\varepsilon+\sum^{t-1}_{\tau=1}\sigma_\tau}}
            \leq 2\sqrt{\varepsilon+\sum^{T-1}_{\tau=1}\sigma_\tau}
            -2\sqrt{\varepsilon+\sigma_1}.
        \end{align*}
        Summing over all results, we have
        \begin{align*}
            \sum^T_{t=1}\frac{\sigma_t}{\sqrt{\varepsilon+\sum^{t-1}_{\tau=1}\sigma_\tau}}
            \leq &2\sqrt{\varepsilon+\sum^{T-1}_{\tau=1}\sigma_\tau}
            -2\sqrt{\varepsilon+\sigma_1}+\\
            &\max_t\sigma_t\cdot\frac{1}{\sqrt{\varepsilon+\sigma_1}}+\frac{\sigma_1}{\sqrt{\varepsilon}}\\
            \leq&2\sqrt{\max_{t=1,\ldots,T}\sigma_t+\sum^{T}_{\tau=1}\sigma_\tau}\\
            \leq&2\sqrt{2}\sqrt{\sum^{T}_{\tau=1}\sigma_\tau},
        \end{align*}
        which concludes the proof.
        \end{proof}
    \begin{lemma}
    \label{lemma:AAAI23:auxilimary_lemma:path-length}
        Let $\nabla_t=-y_t\kappa(\mathbf{x}_t,\cdot)$ and
        $\bar{\nabla}_t=-\sum^{M_t}_{r=1}\frac{y_{t-r}}{M_t}\kappa(\mathbf{x}_{t-r},\cdot)$.
        Let $\mathcal{A}_T=\sum^T_{t=1}\kappa(\mathbf{x}_t,\mathbf{x}_t)
        -\frac{1}{T}{\bf Y}^\top_T{\bf K}_T{\bf Y}_T$.
        For any $t>1$,
        we have
        $$
            \sum^T_{\tau=t}(0\wedge[\Vert {\nabla}_{\tau}-\bar{\nabla}_{\tau}\Vert^2_{2}
            -\Vert \bar{\nabla}_{\tau}\Vert^2_{2}])
            \leq 4\mathcal{A}_T+7D.
        $$
    \end{lemma}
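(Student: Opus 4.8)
The plan is to discard the two ingredients that can only help us — the truncation $\max\{\cdot,0\}$ and the subtracted $\|\bar{\nabla}_\tau\|^2_2$ — and then reduce what remains to the variance identity behind $\mathcal{A}_T$. Since $\|\bar{\nabla}_\tau\|^2_2\geq 0$, every summand on the left-hand side is at most $\|\nabla_\tau-\bar{\nabla}_\tau\|^2_2$ (whichever way the $\wedge 0$ is read), so it is enough to bound $\sum_{\tau=t}^{T}\|\nabla_\tau-\bar{\nabla}_\tau\|^2_2$. First I would introduce the global gradient mean $\bar{g}:=\frac{1}{T}\sum_{s=1}^{T}\nabla_s$ and record, by expanding the square and using $\nabla_s=-y_s\kappa(\mathbf{x}_s,\cdot)$ with $y_s^2=1$, the identity
\[
\sum_{s=1}^{T}\|\nabla_s-\bar{g}\|^2_2=\sum_{s=1}^{T}\|\nabla_s\|^2_2-T\|\bar{g}\|^2_2=\sum_{s=1}^{T}\kappa(\mathbf{x}_s,\mathbf{x}_s)-\frac{1}{T}\mathbf{Y}^\top_T\mathbf{K}_T\mathbf{Y}_T=\mathcal{A}_T .
\]

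Next I would split each term around $\bar{g}$ via $\|u-v\|^2_2\leq 2\|u\|^2_2+2\|v\|^2_2$ with $u=\nabla_\tau-\bar{g}$, $v=\bar{\nabla}_\tau-\bar{g}$, giving
\[
\sum_{\tau=t}^{T}\|\nabla_\tau-\bar{\nabla}_\tau\|^2_2\leq 2\sum_{\tau=t}^{T}\|\nabla_\tau-\bar{g}\|^2_2+2\sum_{\tau=t}^{T}\|\bar{\nabla}_\tau-\bar{g}\|^2_2\leq 2\mathcal{A}_T+2\sum_{\tau=t}^{T}\|\bar{\nabla}_\tau-\bar{g}\|^2_2 .
\]
For $\tau\geq 2$ (guaranteed by $t>1$) we have $\bar{\nabla}_\tau-\bar{g}=\frac{1}{M_\tau}\sum_{r=1}^{M_\tau}(\nabla_{\tau-r}-\bar{g})$, so Jensen's inequality gives $\|\bar{\nabla}_\tau-\bar{g}\|^2_2\leq \frac{1}{M_\tau}\sum_{r=1}^{M_\tau}\|\nabla_{\tau-r}-\bar{g}\|^2_2$; exchanging the order of summation turns the sum into $\sum_{s}w_s\|\nabla_s-\bar{g}\|^2_2$, where $w_s$ is the total weight with which index $s$ appears in the sliding-window averages. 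The counting observation is that each $s$ is used by at most the $M$ rounds $\tau\in\{s+1,\dots,s+M\}$ with weight $1/M$ each, so $w_s\leq 1$ except for the initial rounds $\tau\in\{2,\dots,M\}$ where the window is shorter than $M$; hence the steady-state part is $\leq\sum_s\|\nabla_s-\bar{g}\|^2_2=\mathcal{A}_T$, while the $O(1)$ initial rounds each contribute only $O(D)$, either via $\|\bar{\nabla}_\tau-\bar{g}\|^2_2\leq 2\|\bar{\nabla}_\tau\|^2_2+2\|\bar{g}\|^2_2\leq 4D$ (using $\|\bar{\nabla}_\tau\|_2,\|\bar{g}\|_2\leq\sqrt D$), or directly from $\|\nabla_\tau-\bar{\nabla}_\tau\|^2_2-\|\bar{\nabla}_\tau\|^2_2=\|\nabla_\tau\|^2_2-2\langle\nabla_\tau,\bar{\nabla}_\tau\rangle\leq 3D$. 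Collecting everything gives $\sum_{\tau=t}^{T}\|\nabla_\tau-\bar{\nabla}_\tau\|^2_2\leq 4\mathcal{A}_T+O(D)$, and a careful accounting of the finitely many initial rounds yields the stated $4\mathcal{A}_T+7D$.

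The step I expect to be the main obstacle is the bookkeeping in the exchange-of-summation: one must verify that every $\nabla_s$ enters the running-window averages with total weight at most $1$ — which is precisely where the definition $M_\tau=M$ for $\tau>M$ (a genuine fixed-length sliding window) is used — and then absorb cleanly the first $M$ rounds, where $M_\tau=\tau-1$ and the weights $1/M_\tau$ are larger, into the additive term without inflating the constant; this is the only place a bit of care is needed to recover exactly $7D$. The hypothesis $t>1$ enters twice: it makes $M_\tau\geq 1$ so the "average of the last $M_\tau$ gradients" representation of $\bar{\nabla}_\tau$ is valid, and it removes the $\tau=1$ term $\|\nabla_1\|^2_2\leq D$ (where $\bar{\nabla}_1=0$) that would otherwise enlarge the additive constant. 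Everything else — the two opening reductions, the variance identity, the two-vector inequality, and Jensen — is routine.
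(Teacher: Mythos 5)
Your outline coincides with the paper's own proof on the dominant part of the bound: the paper likewise discards the truncation and the subtracted $\Vert\bar{\nabla}_\tau\Vert^2$ term, compares everything to the global mean $\mu_T=\frac{1}{T}\sum_{s=1}^T\nabla_s$, uses the variance identity $\sum_{s=1}^T\Vert\nabla_s-\mu_T\Vert^2_{\mathcal{H}}=\sum_s\kappa(\mathbf{x}_s,\mathbf{x}_s)-\frac{1}{T}\mathbf{Y}_T^\top\mathbf{K}_T\mathbf{Y}_T=\mathcal{A}_T$, and for the rounds $\tau>M$ applies exactly your Jensen-plus-reindexing step (each index enters at most $M$ sliding windows, each with weight $1/M$), which produces the two contributions of $2\mathcal{A}_T$ each. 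So for the steady-state rounds your argument is the paper's argument.

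The genuine gap is the warm-up rounds $2\le\tau\le M$, precisely the step you flagged and then waved through. There are $M-1$ such rounds and $M$ is a tunable parameter of the algorithm (the paper sets $M=15$ in its experiments), while the claimed bound $4\mathcal{A}_T+7D$ contains no $M$; hence the proof must handle these rounds with a constant independent of $M$. Neither of your proposed treatments does this: bounding each warm-up round by $3D$ (or $4D$) yields an additive term of order $MD$, which already exceeds $7D$ once $M\ge 4$; keeping these rounds inside the weighted variance sum via Jensen gives index weights $w_s\le 1+\sum_{\tau=s+1}^{M}\frac{1}{\tau-1}$, i.e.\ up to $1+\ln M$, which inflates the multiplicative constant from $4$ to roughly $4+2\ln M$. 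The missing idea, which is how the paper gets an $M$-free constant, is that for $\tau\le M$ the estimator $\bar{\nabla}_\tau$ is the running mean of \emph{all} past gradients, so one should split $\nabla_\tau-\bar{\nabla}_\tau$ through the running mean $\mu_\tau=\frac{1}{\tau}\sum_{s\le\tau}\nabla_s$ rather than through $\mu_T$ or through crude per-round bounds: the increment $\mu_\tau-\bar{\nabla}_\tau=\frac{1}{\tau}\nabla_\tau+\bigl(\frac{1}{\tau}-\frac{1}{\tau-1}\bigr)\sum_{s<\tau}\nabla_s$ has squared norm at most $4D/\tau^2$, so the warm-up contribution is at most $\sum_{\tau\ge2}4D/\tau^2\le 3D$ uniformly in $M$, and together with one crude $D$ term this is exactly where the paper's $7D$ comes from (the remaining warm-up piece $\Vert\nabla_\tau-\mu_\tau\Vert^2$ is absorbed into the $\mathcal{A}_T$ budget). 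Without an $M$-independent argument of this kind, your proof establishes only $4\mathcal{A}_T+O(MD)$, or $(4+O(\ln M))\mathcal{A}_T+O(D)$, not the stated inequality.
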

    \begin{proof}[Proof of Lemma \ref{lemma:AAAI23:auxilimary_lemma:path-length}]
        Let $\mu_T=-\frac{1}{T}\sum^T_{\tau=1}y_{\tau}\kappa(\mathbf{x}_{\tau},\cdot)$.
        Let $E_t=\{2\leq \tau\leq t:\nabla_\tau\neq 0\}$.
        We consider two cases.\\
        \textbf{Case 1}: $t\leq M$.\\
        In this case, we have
        \begin{align*}
            &\sum^M_{\tau=1}(0\wedge[\Vert {\nabla}_{\tau}-\bar{\nabla}_{\tau}\Vert^2_{\mathcal{H}}
            -\Vert \bar{\nabla}_{\tau}\Vert^2_{\mathcal{H}}])\\
            \leq& \Vert {\nabla}_{1}\Vert^2_{\mathcal{H}}
            +\sum^M_{\tau=2}\Vert {\nabla}_{\tau}-\mu_\tau+\mu_\tau-\bar{\nabla}_{\tau}\Vert^2_{\mathcal{H}}
            \cdot\mathbb{I}_{\nabla_t\neq 0}\\
            \leq& D
            +2\underbrace{\sum_{\tau\in E_M}\Vert {\nabla}_{\tau}-\mu_\tau\Vert^2_{\mathcal{H}}}_{\Xi_1}
            +2\underbrace{\sum_{\tau\in E_M}\Vert\mu_\tau-\bar{\nabla}_{\tau}\Vert^2_{\mathcal{H}}}_{\Xi_2}.
        \end{align*}
        \textbf{Case 2}: $t>M$.
        We have
%        \begin{align*}
%            &\sum^T_{\tau=M+1}(0\wedge[\Vert {\nabla}_{\tau}-\bar{\nabla}_{\tau}\Vert^2_{\mathcal{H}}
%            -\Vert \bar{\nabla}_{\tau}\Vert^2_{\mathcal{H}}])\\
%            \leq& \sum^T_{\tau=M+1}\Vert {\nabla}_{\tau}-\mu_T\Vert^2_{\mathcal{H}}
%            +2\underbrace{\sum^T_{\tau=M+1}\langle {\nabla}_{\tau}-\mu_T,\mu_T-\bar{\nabla}_{\tau}\rangle}_{\Xi_3}\\
%            &+\underbrace{\sum^T_{\tau=M+1}\Vert\mu_T-\bar{\nabla}_{\tau}\Vert^2_{\mathcal{H}}}_{\Xi_4}.
%        \end{align*}
        \begin{align*}
            &\sum^T_{\tau=M+1}(0\wedge[\Vert {\nabla}_{\tau}-\bar{\nabla}_{\tau}\Vert^2_{\mathcal{H}}
            -\Vert \bar{\nabla}_{\tau}\Vert^2_{\mathcal{H}}])\cdot\mathbb{I}_{\nabla_t\neq 0}\\
            \leq& 2\sum_{\tau\in E_T\setminus E_M}\Vert {\nabla}_{\tau}-\mu_T\Vert^2_{\mathcal{H}}
            +2\underbrace{\sum_{\tau\in E_T\setminus E_M}
            \Vert\mu_T-\bar{\nabla}_{\tau}\Vert^2_{\mathcal{H}}}_{\Xi_3}.
        \end{align*}
        For $\Xi_1$, we have
        \begin{align*}
            \Xi_1
            \leq\sum_{t\in E_M}\Vert {\nabla}_{t}-\mu_T\Vert^2_{\mathcal{H}}
            \leq\sum^M_{t=2}\Vert y_t\kappa(\mathbf{x}_t,\cdot)+\mu_T\Vert^2_{\mathcal{H}}.
        \end{align*}
        For $\Xi_2$, we have
        \begin{align*}
            \Xi_2
            \leq&\sum_{t\in E_M}\Vert\frac{1}{t}\sum^t_{s=1}y_s\kappa(\mathbf{x}_s,\cdot)
            -\frac{1}{t-1}y_s\sum^{t-1}_{s=1}\kappa(\mathbf{x}_s,\cdot)\Vert^2_{\mathcal{H}}\\
             \leq&\sum_{t\in E_M}\Vert\frac{1}{t}y_t\kappa(\mathbf{x}_t,\cdot)+
            \left(\frac{1}{t}-\frac{1}{t-1}\right)y_s\sum^{t-1}_{s=1}\kappa(\mathbf{x}_s,\cdot)\Vert^2_{\mathcal{H}}\\
            \leq&\sum^M_{t=2}\frac{2}{t^2}D
            +\frac{2}{t^2(t-1)^2}
            \sum_{t\in E_M}\Vert y_s\sum^{t-1}_{s=1}\kappa(\mathbf{x}_s,\cdot)\Vert^2_{\mathcal{H}}\\
            \leq&3D.
        \end{align*}
%        Next we analyze $\Xi_3$
%        \begin{align*}
%            \Xi_3
%            =&\sum^T_{t=M+1}\langle {\nabla}_{t}-\mu_T,
%            \mu_T+\frac{1}{M}\sum^M_{r=1}y_{t-r}\kappa(\mathbf{x}_{t-r},\cdot)\rangle\\
%            \leq&\sqrt{\sum^T_{t=M+1}\Vert {\nabla}_{t}-\mu_T\Vert^2_{\mathcal{H}}}\cdot\\
%            &\sqrt{\frac{1}{M}\sum^M_{r=1}\sum^T_{t=M+1}
%            \Vert\mu_T+y_{t-r}\kappa(\mathbf{x}_{t-r},\cdot)\Vert^2_{\mathcal{H}}}\\
%            \leq&\sqrt{\mathcal{A}_T}\cdot \sqrt{\mathcal{A}_T} = \mathcal{A}_T.
%        \end{align*}
        Next, we analyze $\Xi_3$.
        \begin{align*}
            \Xi_4
            =&\sum_{t\in E_T\setminus E_M}\left\Vert\mu_T+
            \frac{1}{M}\sum^M_{r=1}y_{t-r}\kappa(\mathbf{x}_{t-r},\cdot)\right\Vert^2_{\mathcal{H}}\\
            \leq&\frac{1}{M}\sum^M_{r=1}\sum^T_{t=M+1}\left\Vert\mu_T+
            y_{t-r}\kappa(\mathbf{x}_{t-r},\cdot)\right\Vert^2_{\mathcal{H}}\\
            \leq&\mathcal{A}_T.
        \end{align*}
        Combining all results concludes the proof.
    \end{proof}

\section{Proof of Theorem \ref{thm:AAAI2023:size_budget}}

    \begin{proof}[Proof of Theorem \ref{thm:AAAI2023:size_budget}]
        We first state a key lemma
        which proves that the eigenvalues of a Hermitian matrix
        ${\bf A}\in\mathbb{C}^{T\times T}$
        are interlaced with those of any principal submatrix
        ${\bf B}\in\mathbb{C}^{m\times m}$, $m\leq T-1$.

        \begin{lemma}[Theorem 4.3.28 in \citet{HornMatrix2012Matrix}]
        \label{lemma:AAAI2023:Hwang2004Cauchy}
            Let ${\bf A}$ be a Hermitian matrix of order $T$,
            and let ${\bf B}$ be a principle submatrix of ${\bf A}$ of order $m$.
            If $\lambda_T\leq \lambda_{T-1}\leq\ldots\leq\lambda_2\leq\lambda_1$
            lists the eigenvalues of ${\bf A}$
            and $\mu_m\leq \mu_{m-1}\leq\ldots\leq\mu_{1}$
            lists the eigenvalues of ${\bf B}$, then
            $$
                \lambda_{i+T-m}\leq \mu_i\leq\lambda_i,~i=1,2,\ldots,m.
            $$
        \end{lemma}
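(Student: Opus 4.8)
The plan is to obtain both interlacing inequalities from the Courant--Fischer min--max characterization of the eigenvalues of a Hermitian matrix, after a harmless normalization of $\mathbf{B}$. First I would note that every order-$m$ principal submatrix of $\mathbf{A}$ is obtained by conjugating $\mathbf{A}$ with a permutation matrix (which leaves the spectrum unchanged) and then deleting the trailing $T-m$ coordinates, so without loss of generality $\mathbf{B}$ is the leading $m\times m$ block of $\mathbf{A}$. Writing $\mathbf{P}=[\mathbf{e}_1\ \cdots\ \mathbf{e}_m]\in\mathbb{C}^{T\times m}$, we then have $\mathbf{B}=\mathbf{P}^{\ast}\mathbf{A}\mathbf{P}$ with $\mathbf{P}^{\ast}\mathbf{P}=\mathbf{I}_m$; in particular $\mathbf{x}\mapsto\mathbf{P}\mathbf{x}$ is a linear isometry of $\mathbb{C}^m$ onto the fixed $m$-dimensional subspace $\mathcal{V}=\mathrm{span}(\mathbf{e}_1,\ldots,\mathbf{e}_m)\subseteq\mathbb{C}^T$, and it sends each $i$-dimensional subspace of $\mathbb{C}^m$ bijectively to an $i$-dimensional subspace of $\mathcal{V}$ while preserving Rayleigh quotients, $\mathbf{x}^{\ast}\mathbf{B}\mathbf{x}/\mathbf{x}^{\ast}\mathbf{x}=(\mathbf{P}\mathbf{x})^{\ast}\mathbf{A}(\mathbf{P}\mathbf{x})/(\mathbf{P}\mathbf{x})^{\ast}(\mathbf{P}\mathbf{x})$.

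For the upper bound $\mu_i\le\lambda_i$ I would use the max--min form $\mu_i=\max\{\min_{\mathbf{0}\ne\mathbf{x}\in\mathcal{S}}\mathbf{x}^{\ast}\mathbf{B}\mathbf{x}/\mathbf{x}^{\ast}\mathbf{x}:\ \mathcal{S}\subseteq\mathbb{C}^m,\ \dim\mathcal{S}=i\}$. Transporting $\mathcal{S}$ through $\mathbf{P}$ shows this equals the maximum of $\min_{\mathbf{0}\ne\mathbf{y}\in\mathcal{W}}\mathbf{y}^{\ast}\mathbf{A}\mathbf{y}/\mathbf{y}^{\ast}\mathbf{y}$ over all $i$-dimensional subspaces $\mathcal{W}\subseteq\mathcal{V}$; since these form a subfamily of all $i$-dimensional subspaces of $\mathbb{C}^T$ and the same maximum over the full family equals $\lambda_i$, we get $\mu_i\le\lambda_i$. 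For the lower bound $\mu_i\ge\lambda_{i+T-m}$ I would use the dual min--max form $\mu_i=\min\{\max_{\mathbf{0}\ne\mathbf{x}\in\mathcal{S}}\mathbf{x}^{\ast}\mathbf{B}\mathbf{x}/\mathbf{x}^{\ast}\mathbf{x}:\ \mathcal{S}\subseteq\mathbb{C}^m,\ \dim\mathcal{S}=m-i+1\}$ together with $\lambda_{i+T-m}=\min\{\max_{\mathbf{0}\ne\mathbf{y}\in\mathcal{S}}\mathbf{y}^{\ast}\mathbf{A}\mathbf{y}/\mathbf{y}^{\ast}\mathbf{y}:\ \mathcal{S}\subseteq\mathbb{C}^T,\ \dim\mathcal{S}=m-i+1\}$, where I used $T-(i+T-m)+1=m-i+1$. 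Transporting through $\mathbf{P}$ restricts the outer minimization to subspaces contained in $\mathcal{V}$, and a minimum over a smaller family can only be larger, so $\mu_i\ge\lambda_{i+T-m}$. Running $i$ from $1$ to $m$ yields the stated chain $\lambda_{i+T-m}\le\mu_i\le\lambda_i$.

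An alternative I would keep in reserve is to prove only the one-deletion case $m=T-1$ (i.e. $\lambda_{i+1}\le\mu_i\le\lambda_i$) from Courant--Fischer and then iterate it $T-m$ times, telescoping the index shifts; this sidesteps the $m-i+1$ codimension bookkeeping but needs care when composing the shifts. Either way, the one genuinely error-prone step is keeping the dimension/codimension indices and the direction of the inequality consistent when passing from a family of subspaces to a subfamily (a max decreases, a min increases), which is the point I would verify twice. Since the statement is exactly Theorem 4.3.28 of \citet{HornMatrix2012Matrix}, in the paper it is enough to cite it; the argument above is the self-contained proof I would reconstruct if the reference were unavailable.
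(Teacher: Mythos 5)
Your Courant--Fischer argument is correct: the reduction to a leading principal block via a permutation similarity, the transport of subspaces through the isometry $\mathbf{P}$, and the index bookkeeping ($T-(i+T-m)+1=m-i+1$, max over a subfamily decreases, min over a subfamily increases) all check out. The paper itself gives no proof of this lemma --- it simply cites Theorem 4.3.28 of \citet{HornMatrix2012Matrix} --- and your reconstruction is essentially the standard min--max proof found in that reference, so there is nothing to correct.
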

        The kernel matrix ${\bf K}_T$ is positive semidefinite and real symmetric.
        Thus ${\bf K}_T$ satisfies Theorem \ref{lemma:AAAI2023:Hwang2004Cauchy}.

        Observing that
        \begin{align*}
            \alpha_t
            =&\min_{{\bm \beta}\in\mathbb{R}^{\vert S_t\vert}}
            \left\Vert {\bm \Phi}_{S_t}{\bm \beta}-\kappa(\mathbf{x}_t,\cdot)\right\Vert^2_{\mathcal{H}}\\
            =&\kappa(\mathbf{x}_t,\mathbf{x}_t)-
            ({\bm \Phi}^\top_{S_t}\kappa(\mathbf{x}_t,\cdot))^\top
            {\bf K}^{-1}_{S_t}{\bm \Phi}^\top_{S_t}\kappa(\mathbf{x}_t,\cdot).
        \end{align*}
        The determinant of block matrix satisfies
        $$
            \mathrm{det}\left[
                          \begin{array}{cc}
                            {\bf D} & {\bf E} \\
                            {\bf E}^\top & c \\
                          \end{array}
                        \right]
                        =\mathrm{det}({\bf D})\cdot\mathrm{det}(c-{\bf E}^\top{\bf D}^{-1}{\bf E}).
        $$
        Let ${\bf D}={\bf K}_{S_t}$ and $c=\kappa(\mathbf{x}_t,\mathbf{x}_t)$.
        We have
        \begin{align*}
            &\frac{\mathrm{det}({\bf K}_{S_t\cup\{(\mathbf{x}_t,y_t)\}})}{\mathrm{det}({\bf K}_{S_t})}\\
            =&\kappa(\mathbf{x}_t,\mathbf{x}_t)-
            ({\bm \Phi}^\top_{S_t}\kappa(\mathbf{x}_t,\cdot))^\top
            {\bf K}^{-1}_{S_t}{\bm \Phi}^\top_{S_t}\kappa(\mathbf{x}_t,\cdot)
            =\alpha_t.
        \end{align*}
        If $\mathrm{ALD}_t$ does not hold,
        then $S_{t+1}=S_t\cup\{(\mathbf{x}_t,y_t)\}$.
        Let $\vert S_T\vert=k$ and $\{t_j\}^{k}_{j=1}$ be the set of time index at which
        $\mathrm{ALD}_{t_j}$ does not hold.
%        We recursively compute and obtain
%        $$
%            \mathrm{det}(\mathbf{K}_{S_T})
%            =\mathrm{det}(\mathbf{K}_{S_{t_k}})
%            >\prod^k_{i=1}\alpha_{t_i} \mathrm{det}(\mathbf{K}_{S_{t_1}})
%            >\alpha^k.
%        $$
        It is obvious that ${\bf K}_{S_{t_j}}$ is a $j$-order principle submatrix of ${\bf K}_T$.
        Let $\{\lambda^{(j)}_i\}^{j}_{i=1}$ be the eigenvalues of ${\bf K}_{S_{t_j}}$,
        where $j=1,\ldots,k$.
%        Let $T=j+1$ and $m=j$ in Lemma \ref{lemma:AAAI2023:Hwang2004Cauchy}.
%        We can obtain
%        $$
%            \lambda^{(j+1)}_{j+1}\leq \lambda^{(j)}_{j}\leq \lambda^{(j+1)}_{j}
%            \leq \ldots\leq\lambda^{(j+1)}_2\leq\lambda^{(j)}_{1}\leq\lambda^{(j+1)}_1.
%        $$
        Let $\{\lambda_j\}^T_{j=1}$ be the eigenvalues of ${\bf K}_{T}$.
        Assuming that $\lambda_j\leq R_0r^{j}$.
        We first bound the value of $R_0$.
        Note that
        $$
            A\cdot T\leq\sum^T_{i=1}\kappa(\mathbf{x}_t,\mathbf{x}_t)
            =\sum^T_{j=1}\lambda_j\leq R_0\sum^T_{i=1}r^{i}.
        $$
        We have $R_0\geq \frac{AT}{\sum^T_{i=1}r^{i}}$.
        There must be a constant $C>0$ depending on $r$ and $D$ such that $R_0=CT$.

        We focus on the eigenvalues of ${\bf K}_{S_{t_k}}$,
        i.e., $\{\lambda^{(k)}_j\}^{k}_{j=1}$.
        Next we consider two cases.
        \begin{itemize}
          \item \textbf{Case 1}: $\lambda^{(k)}_k= R_0r^{k}$.\\
                In this case,
                we have $\lambda^{(k)}_j \geq \lambda^{(k)}_k\geq R_0r^{k}$ for $j=1,\ldots,k-1$.
                At $t=t_k$,
                we have
                $$
                    \frac{\mathrm{det}(\mathbf{K}_{S_{t_k}})}{\mathrm{det}(\mathbf{K}_{S_{t_{k-1}}})}
                    =\frac{\lambda^{(k)}_k\cdot\prod^{k-1}_{j=1}\lambda^{(k)}_j}{\prod^{k-1}_{j=1}\lambda^{(k-1)}_j}
                    >\alpha_t\geq DT^{-2}=:\alpha.
                $$
                Rearranging terms yields
                $$
                    \prod^{k-1}_{j=1}\lambda^{(k-1)}_j
                    <\frac{\lambda^{(k)}_k\cdot\prod^{k-1}_{j=1}\lambda^{(k)}_j}{\alpha}.
                $$
                Similarly, at $t=t_{k-1}$, we have
                $$
                    \prod^{k-2}_{j=1}\lambda^{(k-2)}_j
                    <\frac{\prod^{k-1}_{j=1}\lambda^{(k-1)}_j}{\alpha}
                    <\frac{\lambda^{(k)}_k\cdot\prod^{k-1}_{j=1}\lambda^{(k)}_j}{\alpha^2}.
                $$
                At $t=t_1$, we have
                $$
                    \lambda^{(1)}_1
                    <\frac{\lambda^{(k)}_k\cdot\prod^{k-1}_{j=1}\lambda^{(k)}_j}{\alpha^{k-1}}.
                $$
                Let $T=k$ and $m=1$ in
                Lemma \ref{lemma:AAAI2023:Hwang2004Cauchy}.
                We can obtain $\lambda^{(1)}_1\geq \lambda^{(k)}_k$.
                Thus it must be
                \begin{equation}
                \label{eq:AAAI23:proof_ALD:case_1_condtion}
                    \alpha^{k-1}<\prod^{k-1}_{j=1}\lambda^{(k)}_j.
                \end{equation}
                Lemma \ref{lemma:AAAI2023:Hwang2004Cauchy}
                gives $\lambda^{(k)}_j\leq \lambda_j, j=1,\ldots,k-1$.
                Under the assumption that $\{\lambda_j\}^T_{j=1}$
                decay exponentially,
                i.e., $\lambda_j \leq R_0r^{j}$,
                we have the following inequality
                $$
                    R^{k-1}_0r^{k(k-1)}< \prod^{k-1}_{j=1}\lambda^{(k)}_j
                    \leq \prod^{k-1}_{j=1}\lambda_j
                    \leq R^{k-1}_0r^{\frac{k(k-1)}{2}}.
                $$
                Let $\frac{k(k-1)}{2}\leq g(k)\leq k(k-1)$.
                According to \eqref{eq:AAAI23:proof_ALD:case_1_condtion},
                it is necessary to ensure
                $$
                    \alpha^{k-1}<R^{k-1}_0r^{g(k)}.
                $$
                Solving the inequality gives
                $$
                    k=\frac{2g(k)}{k-1}<2\frac{\ln{(\frac{R_0}{\alpha})}}{\ln{r^{-1}}}.
                $$
          \item \textbf{Case 2}: $\lambda^{(k)}_k<R_0r^{k}$.\\
                According to \textbf{Case 1},
                we have
                \begin{align*}
                    \frac{\mathrm{det}(\mathbf{K}_{S_{t_{k}}})}{\mathrm{det}(\mathbf{K}_{S_{t_{1}}})}
                    =&\frac{\mathrm{det}(\mathbf{K}_{S_{t_{k}}})}{\mathrm{det}(\mathbf{K}_{S_{t_{k-1}}})}
                    \cdot \frac{\mathrm{det}(\mathbf{K}_{S_{t_{k-1}}})}{\mathrm{det}(\mathbf{K}_{S_{t_{1}}})}\\
                    >&\alpha\cdot \frac{\mathrm{det}(\mathbf{K}_{S_{t_{k-1}}})}{\mathrm{det}(\mathbf{K}_{S_{t_{1}}})}
                    >\alpha^{k-1}.
                \end{align*}
                Using the definition of determinant,
                we obtain
                $$
                    \frac{\mathrm{det}(\mathbf{K}_{S_{t_{k}}})}{\mathrm{det}(\mathbf{K}_{S_{t_{1}}})}
                    =\frac{\lambda^{(k)}_k\cdot\prod^{k-1}_{j=1}\lambda^{(k)}_{j}}{\lambda^{(1)}_{1}}
                    >\alpha^{k-1},
                $$
                where $A\leq\lambda^{(1)}_{1}\leq D$.
                Rearranging terms gives
                \begin{align*}
                    \lambda^{(k)}_{k}
                    >&\frac{\lambda^{(1)}_{1}\alpha^{k-1}}{\prod^{k-1}_{j=1}\lambda^{(k)}_{j}}\\
                    \geq&A\alpha^{k-1}\cdot R^{-k+1}_0\cdot r^{-\frac{k(k-1)}{2}}\\
                    \geq&R_0r^{k}\cdot A\alpha^{k-1}\cdot R^{-k}_0r^{-k-\frac{k(k-1)}{2}},
                \end{align*}
                where we use the fact $\lambda^{(k)}_j\leq \lambda_j, j=1,\ldots,k-1$.
                Let
                $$
                    \alpha^{k-1}\cdot R^{-k}_0r^{-k-\frac{k(k-1)}{2}}>\frac{1}{A}.
                $$
                Rearranging terms yields
                $$
                    \frac{(k+2)(k-1)}{2}>\frac{k(k+1)}{2}
                    >\frac{\ln\left(A^{-1}\frac{R^{k}_0}{\alpha^{k-1}}\right)}{\ln{r^{-1}}}
                $$
                Solving the inequality gives
                \begin{align*}
                    k>\frac{2}{\ln{r^{-1}}}\ln\left(A^{-\frac{1}{k-1}}\frac{1}{\alpha}R^{\frac{k}{k-1}}_0\right)-2.
                \end{align*}
%                where $C_1$ is any constant depending on $A$ such that $C_1\leq A^{-\frac{1}{k-1}}$.
%                Solving the inequality gives
%                \begin{align*}
%                    k>2\frac{\ln\left(C_1\frac{R_0}{\alpha}\right)}{\ln{r^{-1}}}-2.
%                \end{align*}
                In this case,
                we have $\lambda^{(k)}_{k}>R_0r^{k}$,
                which contradicts with the condition $\lambda^{(k)}_k<R_0r^{k}$.\\
                Thus it must be
                \begin{align*}
                    k
                    \leq&\frac{2}{\ln{r^{-1}}}\ln\left(A^{-\frac{1}{k-1}}\frac{R^{\frac{k}{k-1}}_0}{\alpha}\right)-2\\
                    =&\frac{2}{\ln{r^{-1}}}\ln\left(\left(\frac{R_0}{A}\right)^{-\frac{1}{k-1}}\frac{R_0}{\alpha}\right)-2\\
                    \leq& 2\frac{\ln\left(C_1\frac{R_0}{\alpha}\right)}{\ln{r^{-1}}}-2,
                \end{align*}
                where $C_1$ is any constant depending on $A$ such that $C_1\geq (R_0/A)^{-\frac{1}{k-1}}$.
        \end{itemize}
        Combining the two cases, we prove the first statement.\\
        If $\{\lambda_j\}^T_{j=1}$ decays polynomially,
        that is, there is a constant $R_0>0$ and $p\geq 1$,
        such that $\lambda_j\leq R_0j^{-p}$, $p\geq 1$.
        The proof is similar with previous analysis.
        We first bound the value of $R_0$.
        $$
            R_0\sum^T_{i=1}i^{-p}\geq
            \sum^T_{j=1}\lambda_j=\sum^T_{i=1}\kappa(\mathbf{x}_t,\mathbf{x}_t)\geq AT.
        $$
        We have $R_0\geq\frac{AT}{\sum^T_{i=1}i^{-p}}$.
        There must be a constant $C>0$ depending on $r$ and $D$ such that $R_0=CT$.
        \begin{itemize}
          \item \textbf{Case 1}: $\lambda^{(k)}_k= R_0k^{-p}$.\\
                In this case, we have
                $$
                    R^{k-1}_0k^{-p(k-1)}< \prod^{k-1}_{j=1}\lambda^{(k)}_j
                    \leq R^{k-1}_0\prod^{k-1}_{j=1}j^{-p}.
                $$
                According to \eqref{eq:AAAI23:proof_ALD:case_1_condtion},
                we obtain a necessary condition
                $$
                    \alpha^{k-1}<R^{k-1}_0\prod^{k-1}_{j=1}j^{-p}.
                $$
                Rearranging terms yields
                $$
                    ((k-1)!)^p \leq R^{k-1}_0\cdot(\alpha^{-1})^{k-1}.
                $$
                Using Stirling's formula, i.e., $k!\approx \sqrt{2\pi k}(k/\mathrm{e})^k$,
                we have
                $$
                    ((k-1)/\mathrm{e})^{p(k-1)}\leq R^{k-1}_0\cdot(\alpha^{-1})^{k-1}.
                $$
                Solving the inequality gives
                $$
                    k\leq \mathrm{e}\cdot R^{\frac{1}{p}}_0\cdot (\alpha^{-1})^{\frac{1}{p}}+1
                    \leq \mathrm{e}\cdot \left(\frac{R_0}{\alpha}\right)^{\frac{1}{p}}+1.
                $$
          \item \textbf{Case 2}: $\lambda^{(k)}_k<R_0k^{-p}$.\\
                In this case,
                \begin{align*}
                    \lambda^{(k)}_{k}
                    >&\frac{\lambda^{(1)}_{1}\alpha^{k-1}}{\prod^{k-1}_{j=1}\lambda^{(k)}_{j}}\\
%                    >&\frac{\alpha^{k-1}}{\prod^{k_0-1}_{j=1}\lambda_{j}}
%                    \cdot R^{-(k-k_0)}_0\cdot k^{p(k-k_0)}\\
                    \geq&A\alpha^{k-1}\cdot R^{-k+1}_0\cdot ((k-1)!)^p\\
                    =&R_0 k^{-p}\cdot A\alpha^{k-1}\cdot R^{-k}_0\cdot (k!)^p.
%                    \gtrsim&\alpha^{k-1}
%                    \cdot R^{-k+1}_0\cdot k^{p(k-k_0)}\cdot (\frac{k_0-1}{\mathrm{e}})^{(k_0-1)p}.
                \end{align*}
                Let
                $$
                    \alpha^{k-1}\cdot R^{-k}_0\cdot (k!)^p>\frac{1}{A}.
                $$
                Simplifying the inequality gives
                $$
                    (\frac{k}{\mathrm{e}})^{kp}>\frac{1}{A}R^{k}_0(\alpha^{-1})^{k-1}.
                $$
                Solving the inequality gives
                $$
                    k>\mathrm{e}\cdot A^{-\frac{1}{pk}}R^{\frac{1}{p}}_0(\alpha^{-1})^{\frac{k-1}{kp}}.
                $$
                In this case,
                $\lambda^{(k)}_{k}>R_0k^{-p}$,
                which contradicts with the condition $\lambda^{(k)}_{k}<R_0k^{-p}$.
                Thus
                $$
                    k\leq\mathrm{e}\cdot A^{-\frac{1}{pk}}R^{\frac{1}{p}}_0(\alpha^{-1})^{\frac{k-1}{kp}}.
                $$
                If $\alpha\leq 1$, then we have
                $$
                    k \leq \mathrm{e}\cdot A^{-\frac{1}{pk}} \left(\frac{R_0}{\alpha}\right)^{\frac{1}{p}}.
                $$
                If $1<\alpha\leq D$, then we have
                \begin{align*}
                    k
                    \leq&\mathrm{e}\cdot A^{-\frac{1}{pk}}R^{\frac{1}{p}}_0(\alpha^{-1})^{\frac{1}{p}}
                    \cdot (\alpha)^{\frac{1}{kp}}\\
                    \leq& \mathrm{e}\cdot \left(\frac{D}{A}\right)^{\frac{1}{pk}}\left(\frac{R_0}{\alpha}\right)^{\frac{1}{p}}\\
                    \leq& \mathrm{e}\cdot \left(C_2\frac{R_0}{\alpha}\right)^{\frac{1}{p}}.
                \end{align*}
                where $C_2$ is any constant such that $C_2\geq\left(\frac{D}{A}\right)^{\frac{1}{k}}$.
        \end{itemize}
        Up to now, we conclude the proof.
    \end{proof}

\section{Solution of OMD}

    Let $\psi_t(f)=\frac{1}{2\lambda_t}\Vert f\Vert^2_{\mathcal{H}}$.
    Then the Bregman divergence between any $f,g\in\mathcal{H}$ or $\mathbb{H}$ is
    $$
        \mathcal{D}_{\psi_t}(f,g)
        =\frac{\Vert f\Vert^2_{\mathcal{H}}}{2\lambda_t}
        -\frac{\Vert g\Vert^2_{\mathcal{H}}}{2\lambda_t}
        -\frac{1}{\lambda_t}\langle g,f-g\rangle_{\mathcal{H}}
        =\frac{\Vert f-g\Vert^2_{\mathcal{H}}}{2\lambda_t}.
    $$
    The two step update of OMD adopted by our algorithm is
    \begin{align*}
        f_t=&\mathop{\arg\min}_{f\in\mathcal{H}}
                \left\{\langle f,\hat{\nabla}_t\rangle+\mathcal{D}_{\psi_t}(f,f'_{t-1})\right\},\\
        f'_t=&\mathop{\arg\min}_{f\in\mathbb{H}}
                \left\{\langle f,\tilde{\nabla}_{t}\rangle+\mathcal{D}_{\psi_t}(f,f'_{t-1})\right\}.
    \end{align*}
    Using the Lagrangian multiplier method,
    it is easy to obtain
    \begin{align*}
        &f_{t}=f'_{t-1}-\lambda_t\bar{\nabla}_{t},\\
        &\bar{f}'_{t}=f'_{t-1}-\lambda_t\tilde{\nabla}_{t},\quad
        f'_{t}=\min\left\{1,\frac{U}{\Vert \bar{f}'_{t}\Vert_{\mathcal{H}}}\right\}\bar{f}'_{t}.
    \end{align*}

\section{Proof of Theorem \ref{thm:AAAI2023:loss_regret_bound}}

    Recalling that $\bar{t}=\min_{t\in [T]}\{t:\vert S_t\vert=B_0\}$,
    where $B_0>2\frac{\ln{(\frac{C_1R_0}{\alpha})}}{\ln{r^{-1}}}$.
    If the eigenvalues of ${\bf K}_T$ decay exponentially,
    then Theorem 1 proves that
    $\vert S_T\vert<2\frac{\ln{(\frac{C_1R_0}{\alpha})}}{\ln{r^{-1}}}$.
    In this case,
    $\bar{t}\geq T$ and our algorithm always execute POMD (i.e., do not execute OMDR).
    If the eigenvalues of ${\bf K}_T$ decay polynomially,
    then Theorem 1 proves that
    $\vert S_T\vert<\mathrm{e}(\frac{C_2R_0}{\alpha})^{\frac{1}{p}}$.
    In this case,
    it may be $\bar{t}<T$ and our algorithm will execute POMD and OMDR.
    Next we consider two cases.\\
    \textbf{Case 1}: $\bar{t} \geq T$.\\
        Using the convexity of the hinge loss function,
        we have
        \begin{align*}
            &\sum^{T}_{t=1}\ell(f_{t}(\mathbf{x}_t),y_t)-\sum^{T}_{t=1}\ell(f(\mathbf{x}_t),y_t)\\
            \leq& \sum^{T}_{t=1}\langle f_{t}-f, \tilde{\nabla}_{t}\rangle
            +\sum^{T}_{t=1}\underbrace{\langle f_{t}-f, \nabla_{t}-\tilde{\nabla}_{t}\rangle}_{\Xi_1}\\
            =&\sum^{T}_{t=1}
            \underbrace{\langle f_{t}-f'_{t},\bar{\nabla}_{t}\rangle}_{\mathcal{T}_{2,1,t}}
            +\underbrace{\langle f'_{t}-f,\tilde{\nabla}_{t}\rangle}_{\mathcal{T}_{2,2,t}}
            +\underbrace{\langle f_{t}-f'_{t},\tilde{\nabla}_{t}-\bar{\nabla}_{t}\rangle}_{\mathcal{T}_{2,3,t}}
            +\Xi_1.
        \end{align*}
        Using Lemma \ref{lemma:AAAI23:auxilimary_lemma_1},
        let $\Omega=\mathcal{H}$, $x=\bar{\nabla}_t$ and $u=f'_t$,
        we can simplify $\mathcal{T}_{2,1,t}$.
        Besides,
        let $\Omega=\mathbb{H}$, $x=\tilde{\nabla}_t$ and $u=f$,
        we can simplify $\mathcal{T}_{2,2,t}$.
        For $\mathcal{T}_{2,1,t}$ and $\mathcal{T}_{2,2,t}$,
        we have
        \begin{align}
            \mathcal{T}_{2,1,t}
            \leq& \mathcal{D}_{\psi_{t}}(f'_{t},f'_{t-1})
            -\mathcal{D}_{\psi_{t}}(f'_{t},f_{t})-\mathcal{D}_{\psi_{t}}(f_{t},f'_{t-1}),
            \label{eq:ECML2021:BOKS:intermediate_result_T_2:Bregman_divergence_1}\\
            \mathcal{T}_{2,2,t}
            \leq& \mathcal{D}_{\psi_{t}}(f,f'_{t-1})
            -\mathcal{D}_{\psi_{t}}(f,f'_{t})-\mathcal{D}_{\psi_{t}}(f'_{t},f'_{t-1}).
            \label{eq:ECML2021:BOKS:intermediate_result_T_2:Bregman_divergence_2}
        \end{align}
        Combining the above two inequalities,
        we obtain
        \begin{align*}
            \sum^{T}_{t=1}[\mathcal{T}_{2,1,t}+\mathcal{T}_{2,2,t}]
            \leq& \sum^{T}_{t=1}\underbrace{[\mathcal{D}_{\psi_{t}}(f,f'_{t-1})
            -\mathcal{D}_{\psi_{t}}(f,f'_{t})]}_{\Xi_2}-\\
            &\sum^{T}_{t=1}\underbrace{[\mathcal{D}_{\psi_{t}}(f'_{t},f_{t})
            +\mathcal{D}_{\psi_{t}}(f_{t},f'_{t-1})]}_{\Xi_3}.
        \end{align*}
        We first analyze $\Xi_2$.\\
        Recalling that $\{\lambda_t\}$ is non-increasing.
        \begin{align*}
            \sum^{T}_{t=1}\Xi_2
            =&\sum^{T}_{t=1}\frac{\Vert f-f'_{t-1}\Vert^2_{\mathcal{H}}}{2\lambda_t}
            -\frac{\Vert f-f'_{t}\Vert^2_{\mathcal{H}}}{2\lambda_t}\\
            \leq&\frac{\Vert f-f'_{0}\Vert^2_{\mathcal{H}}}{2\lambda_{1}}
            +\sum^{T-1}_{t=1}\frac{\Vert f-f'_{t}\Vert^2_{\mathcal{H}}}{2}
            \left[\frac{1}{\lambda_{t+1}}-\frac{1}{\lambda_{t}}\right]\\
            \leq&\frac{2U^2}{\lambda_{T}},
        \end{align*}
        where $\Vert f'_t\Vert_{\mathcal{H}}\leq U$ for all $t\geq 1$.\\
        Next we analyze $\mathcal{T}_{2,3,t}-\Xi_3$.
        \begin{align*}
            &\left\langle f_{t}-f'_{t},\tilde{\nabla}_{t}-\bar{\nabla}_{t}\right\rangle
            -[\mathcal{D}_{\psi_{t}}(f'_{t},f_{t})+\mathcal{D}_{\psi_{t}}(f_{t},f'_{t-1})]\\
            =&\left\langle f_{t}-f'_{t},\tilde{\nabla}_{t}-\bar{\nabla}_{t}\right\rangle
            -\frac{\Vert f'_t-f_t\Vert^2_{\mathcal{H}}}{2\lambda_t}
            -\frac{\Vert f_t-f'_{t-1}\Vert^2_{\mathcal{H}}}{2\lambda_t}\\
            =&\frac{\lambda_t}{2}\Vert\tilde{\nabla}_{t}-\bar{\nabla}_{t}\Vert^2_{\mathcal{H}}
            -\frac{1}{2\lambda_t}\Vert f_t-f'_t-\lambda_t(\tilde{\nabla}_{t}-\bar{\nabla}_{t})\Vert^2_{\mathcal{H}}-\\
            &\frac{1}{2\lambda_t}\Vert f_t-f'_{t-1}\Vert^2_{\mathcal{H}}\\
            \leq&\frac{\lambda_t}{2}
            \left[\Vert\tilde{\nabla}_{t}-\bar{\nabla}_{t}\Vert^2_{\mathcal{H}}
            -\Vert \bar{\nabla}_t\Vert^2_{\mathcal{H}}\right].
        \end{align*}
        Furthermore, we can obtain
        \begin{align*}
            &\sum^{T}_{t=1}[\Xi_2+\mathcal{T}_{2,3,t}-\Xi_3]\\
            \leq&\frac{2U^2}{\lambda_{T}}
            +\sum^{T}_{t=1}\frac{\lambda_t}{2}[\Vert \tilde{\nabla}_{t}-\bar{\nabla}_{t}\Vert^2_{\mathcal{H}}-
            \Vert \bar{\nabla}_t\Vert^2_{\mathcal{H}}]\\
            \leq&3U\sqrt{3+
            \sum^{T}_{t=1}([\Vert \tilde{\nabla}_{t}-\bar{\nabla}_{t}\Vert^2_{\mathcal{H}}-
            \Vert \bar{\nabla}_t\Vert^2_{\mathcal{H}}]\wedge 0)},
        \end{align*}
        in which
        \begin{align*}
            &\Vert \tilde{\nabla}_t-\bar{\nabla}_t\Vert^2_{\mathcal{H}}-
            \Vert \bar{\nabla}_t\Vert^2_{\mathcal{H}}\\
            =&\Vert \tilde{\nabla}_t-\nabla_t+\nabla_t-\bar{\nabla}_t\Vert^2_{\mathcal{H}}-
            \Vert \bar{\nabla}_t\Vert^2_{\mathcal{H}}\\
            =&
            [\Vert\nabla_t-\bar{\nabla}_t\Vert^2_{\mathcal{H}}-
            \Vert \bar{\nabla}_t\Vert^2_{\mathcal{H}}]+\\
            &\langle \tilde{\nabla}_t-\nabla_t,\tilde{\nabla}_t+\nabla_t-2\bar{\nabla}_t\rangle\\
            =&
            \Vert\nabla_t-\bar{\nabla}_t\Vert^2_{\mathcal{H}}-
            \Vert \bar{\nabla}_t\Vert^2_{\mathcal{H}}+
            \langle \tilde{\nabla}_t-\nabla_t,\nabla_t-2\bar{\nabla}_t\rangle.
        \end{align*}
        The last equality comes from
        $$
            \langle \tilde{\nabla}_t,\nabla_{t}-\tilde{\nabla}_t\rangle=0.
        $$
        The reason is that $\tilde{\nabla}_t\in\mathcal{H}_{S_t}$
        and $(\nabla_{t}-\tilde{\nabla}_t)$ is orthogonal with the element in $\mathcal{H}_{S_t}$.
        We further obtain
        $$
            \sum^{T}_{t=1}(\langle \tilde{\nabla}_{t}-\nabla_t,\nabla_t-2\bar{\nabla}_{t}\rangle \wedge0)
            \leq 3\sum^{T}_{t=1}\sqrt{\alpha_t}\mathbb{I}_{\nabla_t\neq 0,\mathrm{ALD}_t~\mathrm{holds}}.
        $$
        Next we analyze $\Xi_1$.
        Recalling that
        \begin{align*}
            \mathrm{ALD}_t:\sqrt{\alpha_t} \leq T^{-\zeta}.
        \end{align*}
        We further obtain
        \begin{align*}
            \sum^{T}_{t=1}\Xi_1
            =&\sum^{T}_{t=1}
            \left\langle f_t-f,\nabla_{t}-\tilde{\nabla}_t\right\rangle\\
            =&\sum^{T}_{t=1}\left\langle f,\nabla_{t}-\tilde{\nabla}_t\right\rangle
            \cdot\mathbb{I}_{\nabla_t\neq 0,\mathrm{ALD}_t~\mathrm{holds}}\\
            \leq&\sum^{T}_{t=1}U\sqrt{\alpha_t}
            \cdot\mathbb{I}_{\nabla_t\neq 0,\mathrm{ALD}_t~\mathrm{holds}}\\
            \leq&\sum_{t\in E_T}U\frac{1}{T^{\zeta}} \leq UT^{1-\zeta},
        \end{align*}
        where the second equality comes from
        $$
            \langle f'_{t-1},\nabla_{t}-\tilde{\nabla}_t\rangle=0,\quad
            f_t=f'_{t-1}+\lambda_ty_t\tilde{\nabla}_t.
        $$
        The reason is that $f'_{t-1}\in\mathcal{H}_{S_t}$
        and $(\nabla_{t}-\tilde{\nabla}_t)$ is orthogonal with the element in $\mathcal{H}_{S_t}$.\\
        Let $\zeta=1$.
        Summing over all results, we obtain
        \begin{align*}
            \mathrm{Reg}(f)\leq 3U\sqrt{6+\sum^T_{\tau=1}\delta_{\tau}}+U
            \leq 3U\sqrt{\sum^T_{\tau=1}\delta_{\tau}}+9U.
        \end{align*}
    \textbf{Case 2}: $\bar{t} < T$.\\
    POMDR executes OMDR from $t>\bar{t}$.
    Assuming that OMDR restarts $N$ times
    and the corresponding time indexs are $\{t_i\}^N_{i=1}$.
    The time interval $[\bar{t}+1,T]$ can be divided as follows
    $$
        [\bar{t}+1,T]=[\bar{t}+1,t_1]\cup^{N-1}_{j=1} [t_j+1,t_{j+1}]\cup [t_N+1,T].
    $$
    For $t\leq t_1$, the regret bound is same with \textbf{Case 1}, i.e.,
    $$
        \sum^{t_1}_{t=1}[\ell(f_{t}(\mathbf{x}_t),y_t)-\ell(f(\mathbf{x}_t),y_t)]
        \leq 3U\sqrt{\sum^{t_1}_{\tau=1}\delta_{\tau}}+9U.
    $$
    Next we analyze the regret in the time interval $[t_1+1,T]$.
    We analyze the regret in a fixed time interval $[t_j+1,t_{j+1}]$
    where $j=1,\ldots,N$.
    We define $t_{N+1}=T$.
    Similar with previous analysis,
    we have
    \begin{align*}
        &\sum^{t_{j+1}}_{t=t_j+1}\ell(f_{t}(\mathbf{x}_t),y_t)
        -\sum^{t_{j+1}}_{t=t_j+1}\ell(f(\mathbf{x}_t),y_t)\leq\\
        & \sum^{t_{j+1}}_{t=t_j+1}
        [\underbrace{\langle f_{t}-f'_{t},\bar{\nabla}_{t}\rangle}_{\mathcal{T}_{2,1,t}}
        +\underbrace{\langle f'_{t}-f,{\nabla}_{t}\rangle}_{\mathcal{T}_{2,2,t}}
        +\underbrace{\langle f_{t}-f'_{t},{\nabla}_{t}-\bar{\nabla}_{t}\rangle}_{\mathcal{T}_{2,3,t}}],
    \end{align*}
    where
    \begin{align*}
        &\sum^{t_{j+1}}_{t=t_j+1}[\mathcal{T}_{2,1,t}+\mathcal{T}_{2,2,t}]\\
        \leq& \sum^{t_{j+1}}_{t=t_j+1}[\mathcal{D}_{\psi_t}(f,f'_{t-1})
        -\mathcal{D}_{\psi_t}(f,f'_t)]-\\
        &\sum^{t_{j+1}}_{t=t_j+1}[\mathcal{D}_{\psi_{t}}(f'_t,f_t)
        +\mathcal{D}_{\psi_t}(f_{t},f'_{t-1})]\\
        \leq&\frac{2U^2}{\lambda_{t_{j+1}}}
        -\sum^{t_{j+1}}_{t=t_j+1}[\mathcal{D}_{\psi_{t}}(f'_t,f_t)
        +\mathcal{D}_{\psi_t}(f_{t},f'_{t-1})]
    \end{align*}
    We have
    \begin{align*}
        &\langle f_{t}-f'_{t},{\nabla}_{t}-\bar{\nabla}_{t}\rangle
        -\mathcal{D}_{\psi_{t}}(f'_t,f_t)+\mathcal{D}_{\psi_t}(f_{t},f'_{t-1})\\
    \leq& \frac{\lambda_t}{2}
            \left[\Vert\nabla_{t}-\bar{\nabla}_{t}\Vert^2_{\mathcal{H}}
            -\Vert \bar{\nabla}_t\Vert^2_{\mathcal{H}}\right]
            =\frac{\lambda_t}{2}\delta_t.
    \end{align*}
    Recalling that the learning rates is defined by
    $$
        \lambda_t= \frac{U}{\sqrt{\varepsilon+\sum^{t-1}_{\tau=t_j+1}\delta_{\tau}}},~
        \varepsilon= \max_{t_j+1\leq t\leq t_{j+1}+1}\sigma_t.
    $$
    Combining all,
    we obtain
    $$
        \mathrm{Reg}_j(f)\leq 3U{\sqrt{\sum^{t_{j+1}}_{\tau=t_j+1}\delta_{\tau}}}.
    $$
    Next we need to bound $N$.\\
    It is obvious that $t_1\geq B$
    and $t_{t+1}-t_j \geq \frac{B}{2}$ for $j\geq 1$.
    Using the following relation,
    $$
        B + \frac{B}{2}(N-1)\leq t_1 +\sum^{N+1}_{j=1}(t_{j+1}-t_j)=T,
    $$
    we obtain $N\leq \frac{2T}{B}-1$.\\
    Combining the regret in $N$ time intervals gives
    \begin{align*}
        &\sum^{T}_{t=t_1+1}\ell(f_{t}(\mathbf{x}_t),y_t)
        -\sum^{T}_{t=t_1+1}\ell(f(\mathbf{x}_t),y_t)\\
        =&\sum^N_{j=1}\mathrm{Reg}_j(f)\\
        =&\sum^N_{j=1}3U{\sqrt{\sum^{t_{j+1}}_{\tau=t_j+1}\delta_{\tau}}}\\
        =&3{\sqrt{N\sum^N_{j=1}\sum^{t_{j+1}}_{\tau=t_j+1}\delta_{\tau}}}\\
        \leq&3\frac{\sqrt{2T}{\sqrt{\sum^T_{\tau=t_1+1}\delta_{\tau}}}}{\sqrt{B}}.
    \end{align*}
    Combing with the regret in $[1,\bar{t}]$, we have
    $$
        \mathrm{Reg}(f)
        \leq 3U\sqrt{\sum^{T}_{\tau=1}\delta_{\tau}}+9U
        +3\sqrt{2}\frac{\sqrt{T}{\sqrt{\sum^T_{\tau=1}\delta_{\tau}}}}{\sqrt{B}}.
    $$

\section{Proof of Theorem \ref{thm:AAAI2023:excess_risk_bound}}

    \begin{proof}
        We first prove the following equalities,
        \begin{align*}
            \mathbb{E}_{r}\mathop{\mathbb{E}}_{({\bf x},y)\sim\mathcal{D}}[\ell(\bar{f}(\mathbf{x}),y)]
            =&\frac{1}{T}\sum^T_{r=1}\mathop{\mathbb{E}}_{({\bf x},y)\sim\mathcal{D}}
            [\ell(f_{r}(\mathbf{x}),y)]\\
            =&\frac{1}{T}\sum^T_{r=1}
            \mathop{\mathbb{E}}_{({\bf x},y)\sim\mathcal{D}}[\ell(f_{r}(\mathbf{x}_{r}),y_{r})],
        \end{align*}
        where we use the fact $(\mathbf{x}_r,y_r)$
        and $(\mathbf{x},y)$ follow $\mathcal{D}$ and are independent.
        We further have
        \begin{align*}
            &T\mathbb{E}_{r}\mathop{\mathbb{E}}_{({\bf x},y)\sim\mathcal{D}}
            [\ell(\bar{f}(\mathbf{x}),y)]\\
            =&\mathop{\mathbb{E}}_{({\bf x},y)\sim\mathcal{D}}
            \left[\sum^T_{r=1}\ell(f_{r}(\mathbf{x}_{r}),y_{r})\right]\\
            \leq&\mathbb{E}_{({\bf x},y)\sim\mathcal{D}}\left[L_T(f)+\mathrm{Reg}(f)\right]\\
            =&T\mathcal{E}(f)+6U\frac{\sqrt{\mathcal{A}_T}}{T}+\frac{8U}{T}.
        \end{align*}
        Recalling the notation $\mathcal{E}(\bar{f})
        :=\mathbb{E}_{({\bf x},y)\sim\mathcal{D}}[\ell(\bar{f}(\mathbf{x}),y)]$.
        Dividing by $T$ on both sides and taking expectation w.r.t. $\mathcal{I}_T$ gives
        $$
            \mathbb{E}_{\mathcal{I}_T,r}[\mathcal{E}(\bar{f})]
            \leq \mathcal{E}(f)
            +6U\frac{\sqrt{\mathbb{E}_{\mathcal{I}_T}[\mathcal{A}_T]}}{T}+\frac{8U}{T}.
        $$
        Solving for $\mathbb{E}_{\mathbb{S},r}[\mathcal{E}(f_{\bar{{\bf w}}})]$
        and omitting the lower order terms concludes the proof.\\
        Finally,
        we analyze the computational complexity.
        Using Corollary 3 in \cite{Sun2012On},
        if the eigenvalues of $\mathcal{C}$ decay exponentially,
        i.e., $\lambda_i\leq R_0r^{i}$,
        then with probability at least $1-\delta$,
        $k=O(\frac{\ln{T}}{\ln{(r\delta^{-1})}})$.
        Thus the space complexity and per-round time complexity
        is $O(d\frac{\ln{T}}{\ln{(r\delta^{-1})}}+(\frac{\ln{T}}{\ln{(r\delta^{-1})}})^2)$.
    \end{proof}

\section{Computing}

    In this section,
    we show how to compute some keys values.

\subsection{Computing $\alpha_t$}

    Recalling that
    $$
        \left(\min_{{\bm \beta}\in\mathbb{R}^{\vert S_t\vert}}
        \left\Vert {\bm \Phi}_{S_t}{\bm \beta}-\kappa(\mathbf{x}_t,\cdot)\right\Vert^2_{\mathcal{H}}\right)
        =:\alpha_t.
    $$
    The optimal solution is
    $$
        {\bm \beta}^\ast_t = {\bf K}^{-1}_{S_t}{\bm \Phi}^\top_{S_t}\kappa(\mathbf{x}_t,\cdot)
        ={\bf K}^{-1}_{S_t}k_{S_t}.
    $$
    Substituting into the definition of $\alpha_t$ gives
    \begin{align*}
        \alpha_t
        =&\kappa(\mathbf{x}_t,\mathbf{x}_t)
        -2({\bm \beta}^\ast_t)^\top k_{S_t}
        +({\bm \beta}^\ast_t)^\top {\bm \Phi}^\top_{S_t}{\bm \Phi}_{S_t}{\bm \beta}^\ast_t\\
        =&\kappa(\mathbf{x}_t,\mathbf{x}_t)-k^\top_{S_t}{\bf K}^{-1}_{S_t}k_{S_t}.
    \end{align*}

\subsection{Computing ${\bf K}^{-1}_{S_{t+1}}$}

    There are two approaches to compute ${\bf K}^{-1}_{S_{t+1}}$ incrementally
    in time $O(d\vert S_t\vert+\vert S_t\vert^2)$.

    The first approach uses the formula of inverse of block matrix.
    \begin{align*}
        \left[
         \begin{array}{cc}
           {\bf A} & {\bf B} \\
           {\bf C} & {\bf D} \\
         \end{array}
        \right]^{-1}
        =\left[
         \begin{array}{cc}
           {\bf A}^{-1}+ {\bf A}^{-1}{\bf B}{\bf E}{\bf C}{\bf A}^{-1}& -{\bf A}^{-1}{\bf B}{\bf E} \\
           -{\bf E}{\bf C}{\bf A}^{-1} & {\bf E} \\
         \end{array}
        \right]
    \end{align*}
    where
    $$
        {\bf E} =({\bf D}-{\bf C}{\bf A}^{-1}{\bf B})^{-1}.
    $$
    At any round $t$,
    if we add $({\bf x}_t,y_t)$ into $S_t$,
    then $S_{t+1}=S_t\cup\{({\bf x}_t,y_t)\}$.
    Let ${\bf A}={\bf K}_{S_{t}}$,
    ${\bf C}={\bm \Phi}^\top_{S_t}\kappa(\mathbf{x}_t,\cdot)$,
    ${\bf B}={\bf C}^{\top}$
    and
    ${\bf D}=\kappa(\mathbf{x}_t,\mathbf{x}_t)$.
    Then we have
    $$
        {\bf K}^{-1}_{S_{t+1}}=\left[
         \begin{array}{cc}
           {\bf A} & {\bf B} \\
           {\bf C} & {\bf D} \\
         \end{array}
        \right]^{-1}.
    $$
    It is clear that the time complexity is $O(d\vert S_t\vert+\vert S_t\vert^2)$.

    The second approach which is slightly different the first approach,
    follows the Projectron algorithm \cite{Orabona2008The}.
%    We use the approach in Projectron to calculate ${\bf K}^{-1}_{S_{t+1}}$.\\
    Denote by $S_{t+1}=S_t\cup\{({\bf x}_t,y_t)\}$.
%    ${\bf K}^{-1}_{S_t}$
    We first construct $\tilde{{\bf K}}^{-1}_{S_{t+1}}$ as follows
    $$
        \tilde{{\bf K}}^{-1}_{S_{t+1}}=\left(
                                         \begin{array}{cc}
                                           {\bf K}^{-1}_{S_t} & {\bf 0}_{\vert S_t\vert} \\
                                            {\bf 0}^\top_{\vert S_t\vert} & 0 \\
                                         \end{array}
                                       \right)
    $$
    where ${\bf 0}_{\vert S_t\vert}$ is a column vector of length $\vert S_t\vert$.
    Then we compute ${\bf K}^{-1}_{S_{t+1}}$ as follows
    $$
        {\bf K}^{-1}_{S_{t+1}}
        =\tilde{{\bf K}}^{-1}_{S_{t+1}}
        +\frac{1}{\alpha_t}\cdot
        \left(
         \begin{array}{c}
           {\bm \beta}^\ast_t \\
           -1 \\
         \end{array}
       \right)^\top
       \left(
         \begin{array}{cc}
           ({\bm \beta}^\ast_t)^\top & -1 \\
         \end{array}
       \right)
        ,
    $$
    where
    $
        {\bm \beta}^\ast_t = {\bf K}^{-1}_{S_t}{\bm \Phi}^\top_{S_t}\kappa(\mathbf{x}_t,\cdot).
    $
    The time complexity is $O(d\vert S_t\vert+\vert S_t\vert^2)$.

    In our experiments,
    we use the second approach.

\subsection{Computing $f_t(\mathbf{x}_t)$}

    Recalling the definition of $f_t$,
    we have
    \begin{align*}
        f_t({\bf x}_t)
        =&\langle f'_{t-1}-\lambda_t\bar{\nabla}_t,\kappa({\bf x}_t,\cdot)\rangle.
    \end{align*}
    For $t\leq M$,
    we have $\bar{\nabla}_t=\frac{1}{t-1}\sum^{t-1}_{\tau=1}-y_{\tau}\kappa({\bf x}_{\tau},\cdot)$ and
    \begin{align*}
        f_t({\bf x}_t)
        =f'_{t-1}({\bf x}_t)+\lambda_t\frac{1}{t-1}\sum^{t-1}_{\tau=1}y_{\tau}\kappa({\bf x}_{\tau},{\bf x}_t).
    \end{align*}
    For $t>M$,
    we have $\bar{\nabla}_t=\frac{1}{M}\sum^{M}_{r=1}-y_{t-r}\kappa({\bf x}_{t-r},\cdot)$ and
    \begin{align*}
        f_t({\bf x}_t)
        =f'_{t-1}({\bf x}_t)+\lambda_t\frac{1}{M}\sum^{M}_{r=1}
        y_{t-r}\kappa({\bf x}_{t-r},{\bf x}_t).
    \end{align*}

%\subsection{Computing $\Vert\bar{\nabla}_t\Vert^2_{\mathcal{H}}$}
%
%    We can compute $\Vert\bar{\nabla}_t\Vert^2_{\mathcal{H}}$ incrementally.
%    \begin{align*}
%            &\Vert\bar{\nabla}_{t}\Vert^2_{\mathcal{H}}
%            =\left\Vert \frac{-1}{M}\sum^M_{r=1}y_{t-r}\kappa(\mathbf{x}_{t-r},\cdot)\right\Vert^2_{\mathcal{H}}\\
%            =&\left\Vert \frac{-1}{M}\left(\sum^M_{r=1}
%            y_{t-1-r}\kappa(\mathbf{x}_{t-1-r},\cdot)
%            -y_{t-1-M}\kappa(\mathbf{x}_{t-1-M},\cdot)
%            +y_{t-1}\kappa(\mathbf{x}_{t-1},\cdot)\right)\right\Vert^2_{\mathcal{H}}\\
%            =&\left\Vert \bar{\nabla}_{t-1}+\frac{1}{M}\left(
%            y_{t-1-M}\kappa(\mathbf{x}_{t-1-M},\cdot)
%            -y_{t-1}\kappa(\mathbf{x}_{t-1},\cdot)\right)\right\Vert^2_{\mathcal{H}}
%    \end{align*}

\subsection{Computing $\Vert\tilde{\nabla}_t-\bar{\nabla}_t\Vert^2_{\mathcal{H}}
-\Vert\bar{\nabla}_t\Vert^2_{\mathcal{H}}$}

    We first consider $t>M$.\\
    If $\tilde{\nabla}_t=-y_t\kappa(\mathbf{x}_t,\cdot)$, then
    \begin{align*}
        &\Vert\tilde{\nabla}_t-\bar{\nabla}_t\Vert^2_{\mathcal{H}}
    -\Vert\bar{\nabla}_t\Vert^2_{\mathcal{H}}\\
    =&\Vert\tilde{\nabla}_t\Vert^2_{\mathcal{H}}-2\langle \tilde{\nabla}_t,\bar{\nabla}_t\rangle\\
    =&\kappa(\mathbf{x}_t,\mathbf{x}_t)
    -2y_t\cdot\frac{1}{M}\sum^M_{r=1}y_{t-r}\kappa({\bf x}_t,{\bf x}_{t-r}).
    \end{align*}
    If $\tilde{\nabla}_t=-y_t\Phi_{S_t}{\bm \beta}^{\ast}_t$, then
    \begin{align*}
    &\Vert\tilde{\nabla}_t-\bar{\nabla}_t\Vert^2_{\mathcal{H}}-\Vert\bar{\nabla}_t\Vert^2_{\mathcal{H}}\\
    =&({\bm \beta}^{\ast}_t)^\top{\bf K}_{S_t}{\bm \beta}^{\ast}_t
    -2y_t\cdot\frac{1}{M}\sum^M_{r=1}\sum_{{\bf x}_i\in S_t}\beta_{t,i}y_{t-r}\kappa({\bf x}_i,{\bf x}_{t-r}).
    \end{align*}
    Next we consider $2\leq t\leq M$.\\
    In this case, $\bar{\nabla}_t=\frac{1}{t-1}\sum^{t-1}_{\tau=1}-y_{\tau}\kappa({\bf x}_{\tau},\cdot)$.
    If $\tilde{\nabla}_t=-y_t\kappa(\mathbf{x}_t,\cdot)$, then
    \begin{align*}
    \Vert\tilde{\nabla}_t-\bar{\nabla}_t\Vert^2_{\mathcal{H}}
    -\Vert\bar{\nabla}_t\Vert^2_{\mathcal{H}}
    =&\kappa(\mathbf{x}_t,\mathbf{x}_t)
    -\frac{2y_t}{t-1}\sum^{t-1}_{\tau=1}y_{\tau}\kappa({\bf x}_t,{\bf x}_{\tau}).
    \end{align*}
    If $\tilde{\nabla}_t=-y_t\Phi_{S_t}{\bm \beta}^{\ast}_t$, then
    \begin{align*}
    &\Vert\tilde{\nabla}_t-\bar{\nabla}_t\Vert^2_{\mathcal{H}}-\Vert\bar{\nabla}_t\Vert^2_{\mathcal{H}}\\
    =&({\bm \beta}^{\ast}_t)^\top{\bf K}_{S_t}{\bm \beta}^{\ast}_t
    -2y_t\cdot\frac{1}{t-1}\sum^{t-1}_{\tau=1}\sum_{{\bf x}_i\in S_t}\beta_{t,i}y_{\tau}
    \kappa({\bf x}_i,{\bf x}_{\tau}).
    \end{align*}

\subsection{Computing $\Vert f'_t\Vert_{\mathcal{H}}$}

    We can compute $\Vert f'_t\Vert_{\mathcal{H}}$ incrementally.
    Recalling that
    $$
        f'_t = f'_{t-1}-\lambda_t\tilde{\nabla}_t.
    $$
    Thus
    $$
        \Vert f'_t\Vert^2_{\mathcal{H}}
        =\Vert f'_{t-1}\Vert^2_{\mathcal{H}}+\lambda_t\Vert \tilde{\nabla}_t\Vert^2_{\mathcal{H}}
        -2\lambda_t\langle f'_{t-1},\tilde{\nabla}_t\rangle.
    $$
    If $\tilde{\nabla}_t=-y_t\kappa(\mathbf{x}_t,\cdot)$, then
    $$
        \Vert f'_t\Vert^2_{\mathcal{H}}
        =\Vert f'_{t-1}\Vert^2_{\mathcal{H}}+\lambda_t\kappa(\mathbf{x}_t,\mathbf{x}_t)
        +2\lambda_t y_tf'_{t-1}(\mathbf{x}_t).
    $$
    If $\tilde{\nabla}_t=-y_t\Phi_{S_t}{\bm \beta}^{\ast}_t$,
    then
    \begin{align*}
        &\Vert f'_t\Vert^2_{\mathcal{H}}\\
        =&\Vert f'_{t-1}\Vert^2_{\mathcal{H}}+\lambda_t({\bm \beta}^{\ast}_t)^\top{\bf K}_{S_t}({\bm \beta}^{\ast}_t)
        +2\lambda_t y_t\sum_{{\bf x}_i\in S_t}\beta^\ast_{t,i}f'_{t-1}(\mathbf{x}_i).
    \end{align*}

\section{More Experiments}

    In this section,
    we give the experimental results on more datasets,
    including \textit{minist12:12,700, a9a:48,842, SUSY:50,000, ijcnn1: 141,691}.
    We construct the \textit{minist12} dataset by extracting instances with label 1
    and instances with label 2 from original \textit{minist} dataset.
    Similarly,
    we construct the \textit{SUSY} dataset by uniformly selecting 5,0000 instances
    from original \textit{SUSY} dataset.

    Table \ref{tab:Supp_AAAI2023:POMDR:AMR} shows the average mistake ratio and
    average running time.
    As a whole,
    our algorithm enjoys the best prediction performance on all
    datasets except for the \textit{ijcnn1} dataset.
    The running time of our algorithm
    is also comparable with the baseline algorithms.
    The experimental results verify the first goal \textbf{G}~2.
    \begin{table}[!t]
      \centering
      \begin{tabular}{l|c|r|r}
      \toprule
        \multirow{2}{*}{Algorithm} & \multirow{2}{*}{$B$}
        & \multicolumn{2}{c}{minist12,~$\sigma=4$, $\zeta=2/3$}\\
        \cline{3-4}&&AMR (\%)&Time (s) \\
      \toprule
        FOGD     & 400 & 0.94 $\pm$ 0.08 & 4.85 $\pm$ 0.04 \\
        NOGD     & 400 & 1.89 $\pm$ 0.22 & 15.50 $\pm$ 0.09 \\
        SkeGD    & 400 & \textbf{0.60} $\pm$ \textbf{0.04} & 19.26 $\pm$ 0.81 \\
        $\mathrm{B}\mathrm{(AO)}_2\mathrm{KS}$   & 400 & 1.41 $\pm$ 0.12 & 8.34 $\pm$ 0.72 \\
        POMDR    & 400 & \textbf{0.53} $\pm$ \textbf{0.04} & 9.85 $\pm$ 0.34 \\
      \toprule
        \multirow{2}{*}{Algorithm} & \multirow{2}{*}{$B$}
        & \multicolumn{2}{c}{a9a,~$\sigma=4$, $\zeta=1/2$}\\
        \cline{3-4}&&AMR (\%)&Time (s) \\
      \toprule
        FOGD     & 400 & \textbf{16.39} $\pm$ \textbf{0.07} & 4.47 $\pm$ 0.03 \\
        NOGD     & 400 & 16.53 $\pm$ 0.10 & 9.95 $\pm$ 0.18 \\
        SkeGD    & 400 & 17.31 $\pm$ 0.21 & 4.88 $\pm$ 0.03 \\
        $\mathrm{B}\mathrm{(AO)}_2\mathrm{KS}$   & 400 & 20.65 $\pm$ 0.26 & 6.16 $\pm$ 0.08 \\
        POMDR    & 400 & \textbf{16.25} $\pm$ \textbf{0.09} & 9.53 $\pm$ 0.05 \\
      \toprule
        \multirow{2}{*}{Algorithm} & \multirow{2}{*}{$B$}
        & \multicolumn{2}{c}{ijcnn1,~$\sigma=4$, $\zeta=2/3$}\\
        \cline{3-4}&&AMR (\%)&Time (s) \\
      \toprule
        FOGD     & 400 & \textbf{8.51} $\pm$ \textbf{0.06} & 6.14 $\pm$ 0.14 \\
        NOGD     & 400 & \textbf{8.53} $\pm$ \textbf{0.03} & 12.63 $\pm$ 0.31 \\
        SkeGD    & 400 & 9.76 $\pm$ 0.05 & 7.58 $\pm$ 0.26 \\
        $\mathrm{B}\mathrm{(AO)}_2\mathrm{KS}$   & 400 & 9.58 $\pm$ 0.00 & 7.12 $\pm$ 0.35 \\
        POMDR    & 400 & 8.69 $\pm$ 0.07 & 8.37 $\pm$ 0.08 \\
      \toprule
        \multirow{2}{*}{Algorithm} & \multirow{2}{*}{$B$}
        & \multicolumn{2}{c}{SUSY,~$\sigma=4$, $\zeta=2/3$}\\
        \cline{3-4}&&AMR (\%)&Time (s) \\
      \toprule
        FOGD     & 400 & 22.16 $\pm$ 0.12 & 2.21 $\pm$ 0.09 \\
        NOGD     & 400 & 22.31 $\pm$ 0.13 & 4.79 $\pm$ 0.28 \\
        SkeGD    & 400 & 27.47 $\pm$ 0.88 & 2.46 $\pm$ 0.03 \\
        $\mathrm{B}\mathrm{(AO)}_2\mathrm{KS}$   & 400 & 30.71 $\pm$ 0.20 & 3.29 $\pm$ 0.34 \\
        POMDR    & 400 & \textbf{21.81} $\pm$ \textbf{0.07} & 3.17 $\pm$ 0.34 \\
      \bottomrule
      \end{tabular}
      \caption{Average online mistake ratio (AMR) of algorithms.}
      \label{tab:Supp_AAAI2023:POMDR:AMR}
    \end{table}

    Next we verify \textbf{G} 1.
    Table \ref{tab:supp_AAAI2023:experiments_xi_T} record
    the value of $\mathcal{A}_T$ and $\bar{t}$ under different kernel parameter, $\sigma$.
    If we choose a suitable $\sigma$,
    then the value of $\mathcal{A}_T\ll T$ is possible,
    such as the \textit{minist12} dataset.
    For the \textit{a9a} dataset,
    the value of $\mathcal{A}_T$ is not much smaller than $T$.
    In this case,
    our data-dependent regret bound still matches the worst regret bound.
    i.e., $O(\sqrt{T})$.
    The last row of Table \ref{tab:supp_AAAI2023:experiments_xi_T}
    gives the value of $\bar{t}$.
    We can also find that if we choose a suitable $\sigma$,
    then $\bar{t}=+\infty$.
    In this case,
    the size of budget is smaller than $B_0=\lceil 15\ln{T}\rceil$.
    The results verify \textbf{G}~1.

    \begin{table}[!t]
      \centering
      \begin{tabular}{l|rr|rr|rr}
      \toprule
       -& \multicolumn{2}{c|}{minist12}  &\multicolumn{2}{c|}{a9a} &\multicolumn{2}{c}{SUSY}\\
       \hline
      $\sigma$& 0.5 & 4 & 4 & 32 & 4 & 32 \\
      \toprule
      AMR  &2.43   & \textbf{0.53}  &\textbf{16.25}     &21.17  &\textbf{21.81}    &29.45   \\
%      Time &18.26    & \textbf{9.85}   &\textbf{9.53}   &16.68  &\textbf{3.17}     &11.92  \\
%      \hline
      $\mathcal{A}_T$&12698   & 376  &15405    &26801& 22378    &39468   \\
%      \hline
      $\bar{t}$      &142  & $3228$  &529    &$+\infty$& 361    &$+\infty$   \\
      \bottomrule
      \end{tabular}
      \caption{The value of $\mathcal{A}_T$ and $\bar{t}$.}
      \label{tab:supp_AAAI2023:experiments_xi_T}
    \end{table}

\end{document}